\documentclass{article}



\def\final{1}

\ifx\preprint\undefined
    \ifx\final\undefined
    \usepackage{neurips_2020}
    \else
    \usepackage[final]{neurips_2020}
    \fi
\else
\usepackage[preprint]{neurips_2020}
\fi

\setcitestyle{numbers,square}




\usepackage[utf8]{inputenc} 
\usepackage[T1]{fontenc}    
\usepackage{hyperref}       
\usepackage{url}            
\usepackage{booktabs}       
\usepackage{amsfonts}       
\usepackage{nicefrac}       
\usepackage{microtype}      
\usepackage{hyperref}
\usepackage[pdftex]{graphicx}

\usepackage[dvipsnames]{xcolor}
\hypersetup{colorlinks,linkcolor={blue},citecolor={blue},urlcolor={red}}


\usepackage{amsfonts}
\usepackage{amsmath}
\usepackage{amstext}
\usepackage{amssymb}
\usepackage[thmmarks, amsmath, thref, amsthm]{ntheorem}
\usepackage{lipsum} 

\usepackage{float}
\usepackage{subfig}
\usepackage[export]{adjustbox}
\usepackage{listings}
\usepackage{changepage}

\usepackage{wrapfig}

\usepackage{graphicx}

\newcommand{\lref}[2]{\hyperref[#1]{ \ref{#1}#2}}

\usepackage{xcolor}
\usepackage{tcolorbox}
\usepackage[font=small,labelfont=bf]{caption}

\usepackage{multirow}

\usepackage{titlesec}
\titlespacing*{\section}{0pt}{0ex plus 0ex minus 0ex}{0ex plus 0ex}
\titlespacing*{\subsection}{0pt}{0ex plus 0ex minus 0ex}{0ex plus 0ex}

\lstset{basicstyle=\ttfamily\footnotesize}
\newfloat{lstfloat}{htbp}{lop}
\floatname{lstfloat}{Listing}

\newtheorem{theorem}{Theorem}[section]
\newtheorem{lemma}[theorem]{Lemma}
\newtheorem{definition}{Definition}[section]

\newtheorem{example}{Example}[section]



\usepackage[ruled,longend]{algorithm2e}
\usepackage{setspace}
\SetKwRepeat{Do}{do}{while}

\title{Log-Likelihood Ratio Minimizing Flows: \\ Towards Robust and Quantifiable \\ Neural Distribution Alignment}


\author{%
  Ben Usman $^{1,2}$ \\
  \texttt{usmn@bu.edu} \\
  \And
  Avneesh Sud $^{2}$ \\
  \texttt{asud@google.com} \\
  \And
  Nick Dufour $^{2}$\\
  \texttt{ndufour@google.com} \\
  \And
  Kate Saenko $^{1,3}$\\
  \texttt{saenko@bu.edu} \\
  \and
  {\normalfont Boston University $^{1}$ \quad Google AI $^{2}$ \quad MIT-IBM Watson AI Lab $^{3}$} \vspace{-10px}
}

\begin{document}

\maketitle

\begin{abstract}
Distribution alignment has many applications in deep learning, including domain adaptation and unsupervised image-to-image translation. Most prior work on unsupervised distribution alignment relies either on minimizing simple non-parametric statistical distances such as maximum mean discrepancy or on adversarial alignment. However, the former fails to capture the structure of complex real-world distributions, while the latter is difficult to train and does not provide any universal convergence guarantees or automatic quantitative validation procedures. In this paper, we propose a new distribution alignment method based on a log-likelihood ratio statistic and normalizing flows. We show that, under certain assumptions, this combination yields a deep neural likelihood-based minimization objective that attains a known lower bound upon convergence. We experimentally verify that minimizing the resulting objective results in domain alignment that preserves the local structure of input domains.
\end{abstract}
\section{Introduction}


The goal of unsupervised domain alignment is to find a transformation of one dataset that makes it similar to another dataset while preserving the structure of the original. The majority of modern neural approaches to domain alignment directly search for a transformation of the dataset that minimizes an empirical estimate of some statistical distance - a non-negative quantity that takes lower values as datasets become more similar. The variability of what ``similar'' means in this context, which transformations are allowed, and whether data points themselves or their feature representations are aligned, leads to a variety of domain alignment methods. Unfortunately, existing estimators of statistical distances either restrict the notion of similarity to enable closed-form estimation \cite{sun2016deep}, or rely on adversarial (min-max) training \cite{tzeng2017adversarial} that makes it very difficult to quantitatively reason about the performance of such methods \cite{barratt2018note,borji2019pros,theis2015note}. In particular, the value of the optimized adversarial objective conveys very little about the quality of the alignment, which makes it difficult to perform automatic model selection on a new dataset pair. On the other hand, Normalizing Flows \cite{rezende2015variational} are an emerging class of deep neural density models that do not rely on adversarial training. They model a given dataset as a random variable with a simple known distribution transformed by an unknown invertible transformation parameterized using a deep neural network. Recent work on normalizing flows for maximum likelihood density estimation made great strides in defining new rich parameterizations for these invertible transforms \cite{dinh2016density,grathwohl2018ffjord,kingma2018glow}, but little work focused on flow-based density alignment \cite{grover2019alignflow,yang2019pointflow}.

In this paper, we present the Log-likelihood Ratio Minimizing Flow (LRMF), a new non-adversarial approach for aligning distributions in a way that makes them indistinguishable for a given family of density models $M$. 
We consider datasets $A$ and $B$ indistinguishable with respect to the family $M$ if there is a single density model in $M$ that is optimal for both $A$ and $B$ individually since in this case there is no way of telling which of two datasets was used for training it. For example, two different distributions with the same means and covariances are indistinguishable for the Gaussian family $M$ since we can not tell which of two datasets was used by examining the model fitted to either one of them.
For a general $M$, we can quantitatively measure whether 
two datasets are indistinguishable by models from $M$ by comparing average log-likelihoods of two ``private'' density models each fit independently to $A$ and $B$, to the average log-likelihood of the ``shared'' model fit to both datasets at the same time. We observe that, if datasets are sufficiently large, the maximum likelihood of the ``shared'' model would reach the likelihoods of two ``private'' models on respective datasets only if the shared model is optimal for both of them individually, and consequently datasets are equivalent with respect to $M$. Then a density model optimal for $A$ is guaranteed to be optimal for $B$ and vice versa.

We want to find a transformation $T(x)$ that transforms dataset $A$ in a way that makes the transformed dataset $T(A)$ equivalent to $B$ for the given family $M$. We do that by minimizing the aforementioned gap between average log-likelihood scores of ``shared'' and ``private'' models. 
In this paper, we show that, generally, such $T(x)$ can be found only by solving a min-max optimization problem, but if $T(x, \phi)$ is a family of normalizing flows, then the flow $T(x, \phi^*)$ that makes $T(A, \phi^*)$ and $B$ equivalent with respect to $M$ can be found by minimizing a single objective that attains zero upon convergence. This enables automatic model validation and hyperparameter tuning on the held-out set.

To sum up, the novel non-adversarial data alignment method presented in this paper combines the clear convergence criteria found in non-parametric and simple parametric approaches and the power of deep neural discriminators used in adversarial models. Our method finds a transformation of one dataset that makes it ``equivalent'' to another dataset with respect to the specified family of density models. We show that if that transformation is restricted to a normalizing flow, the resulting problem can be solved by minimizing a single simple objective that attains zero only if two domains are correctly aligned. We experimentally verify this claim and show that the proposed method preserves the local structure of the transformed distribution and that it is robust to model misspecification by both over- and under-parameterization. We show that minimizing the proposed objective is equivalent to training a particular adversarial network, but in contrast with adversarial methods, the performance of our model can be inferred from the objective value alone. We also characterize the the vanishing of generator gradient mode that our model shares with its adversarial counterparts, and principal ways of detecting it.
\section{Log-Likelihood Ratio Minimizing Flow} 

\begin{figure*}
\vspace{-45px}
\begin{center}
\includegraphics[trim=10 235 80 0,width=0.9\textwidth,draft=false]{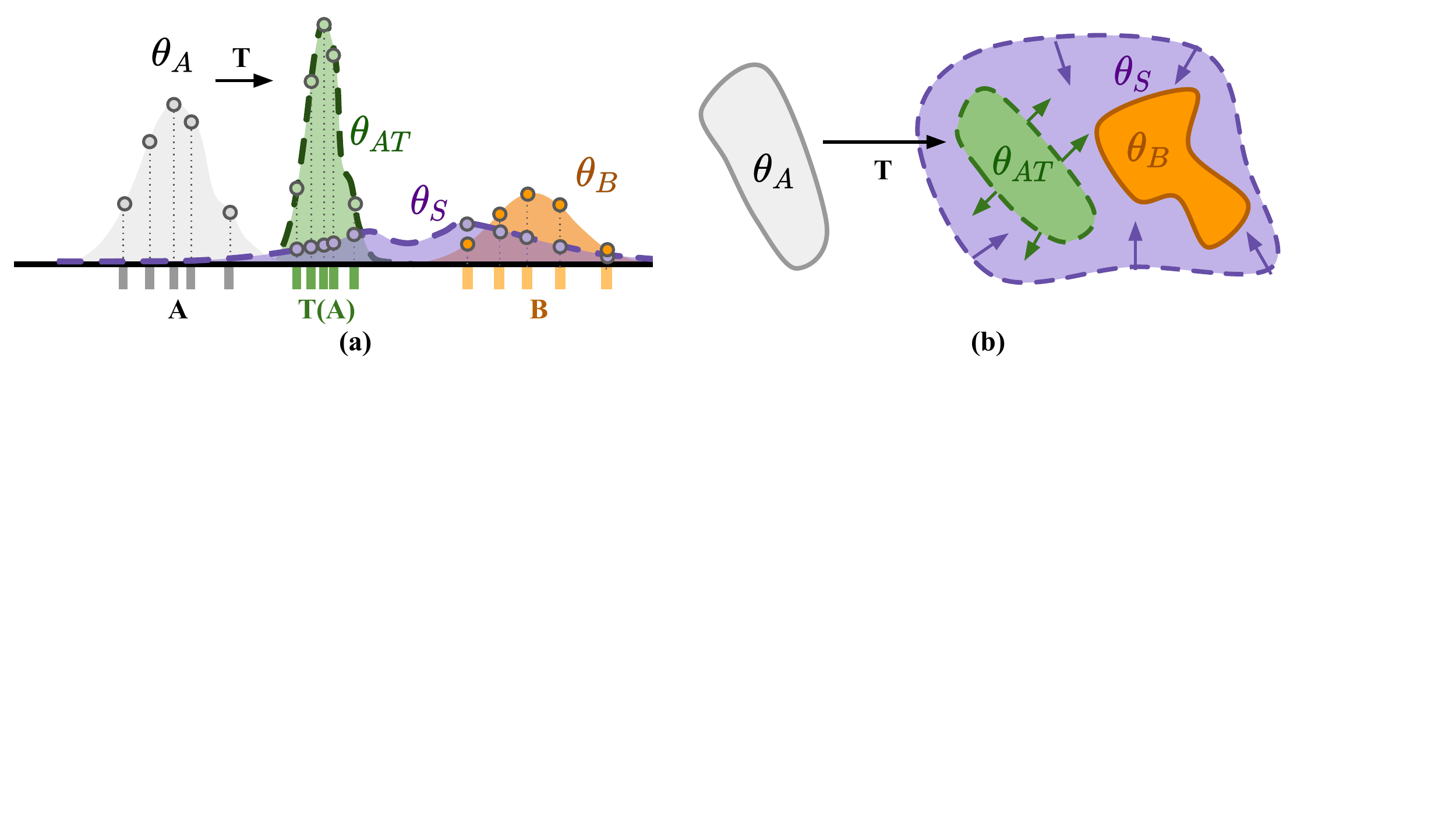}

\caption{To align input datasets $A$ and $B$, we look for a transformation $T$ that makes $T(A)$ and $B$ ``indistinguishable''. \textbf{(a)} We propose the log-likelihood ratio distance $d_{\Lambda}(T(A), B)$ that compares likelihoods of density models $\theta_{AT}$ fitted to $T(A)$ and $\theta_B$ to $B$ independently with the likelihood of $\theta_S$ optimal for the combined dataset $T(A) \cup B$. This problem is adversarial, but we show how to reduce it to minimization if $T$ is a normalizing flow. \textbf{(b)} Colored contours represent level sets of models for $B$ (orange), $T(A)$ (green), and $\theta_S$ (purple), contour sizes corresponds to entropies of these models. Only $\theta_{AT}$ and $\theta_S$ (dashed) change during training. The proposed objective can be viewed as maximizing the entropy of the transformed dataset, while minimizing the combined entropy of $T(A) \cup B$, i.e. expanding green contour while squeezing the purple contour around green and orange contours. At equilibrium, $\theta_S$ and $\theta_{AT}$ model the same distribution as $\theta_B$, i.e. shapes of purple and green contours match and tightly envelope the orange contour. \textit{Best viewed in color}. \vspace{-27px}}
\label{fig:fig1_v3}
\end{center}
\end{figure*}


In this section, we formally define the proposed method for aligning distributions. We assume that $M(\theta)$ is a family of densities parameterized by a real-valued vector $\theta$, and we fit models from $M$ to data by maximizing its likelihood across models from $M$.
Intuitively, if we fit two models $\theta_A$ and $\theta_B$ to datasets $A$ and $B$ independently, and also fit a single shared model $\theta_S$ to the combined dataset $A \cup B$, then the log-likelihood ratio distance would equal the difference between the log-likelihood of that optimal ``shared'' and the two optimal ``private'' models (Definition~\ref{def:llr_distance}).
Next, we consider the problem of finding a transformation that would minimize this distance. In general, this would require solving an adversarial optimization problem (\ref{eq:llrd_with_transform}), but we show that if the transformation is restricted to the family of normalizing flows, then the optimal one can be found by minimizing a simple non-adversarial objective (Theorem~\ref{th:lrmf_bound}). 
We also illustrate this result with an example that can be solved analytically: we show that minimizing the proposed distance between two random variables with respect to the normal density family is equivalent to directly matching their first two moments (Example~\ref{ex:gaussian_lrmf}). Finally, we show the relation between the proposed objective and Jensen-Shannon divergence and show that minimizing the proposed objective is equivalent to training a generative adversarial network with a particular choice of the discriminator family.

\textbf{Notation.} Let $\log P_M(X; \theta) := \mathbb{E}_{x \sim P_X} \log P_M(x; \theta)$ denote the negative cross-entropy between the distribution $P_X$ of the dataset $X$ defined over $\mathcal X \subset \mathbb R^n$, and a member $P_M(x; \theta)$ of the parametric family of distributions $M(\theta)$ defined over the same domain, i.e. the likelihood of $X$ given $P_M(x; \theta)$.

\begin{definition}[LRD]\label{def:llr_distance}
Let us define the log-likelihood ratio distance $d_{\Lambda}$ between datasets $A$ and $B$ from $\mathcal X$ with respect to the family of densities $M$, as the difference between log-likelihoods of $A$ and $B$ given optimal models with ``private'' parameters $\theta_A$ and $\theta_B$, and ``shared'' parameters $\theta_S$:

\vspace{-5px}
\scalebox{0.96}{\parbox{1\linewidth}{%
\begin{align*} 
\begin{split}
    d_{\Lambda}(A, B; M) 
    & = \max_{\theta_A; \theta_B} \Big[\log P_M(A; \theta_A) + \log  P_M(B; \theta_B) \Big] - \max_{\theta_S} \Big[ \log P_M(A; \theta_S) + \log P_M(B; \theta_S) \Big] \\
    & = \min_{\theta_S} \max_{\theta_A; \theta_B} \Big[\big(\log P_M(A; \theta_A) - \log  P_M(A; \theta_S) \big) + \big(\log  P_M(B; \theta_B)  - \log P_M(B; \theta_S) \big) \Big].
\end{split}
\end{align*}
}}
\vspace{-6px}
\end{definition}
The expression above is also the log-likelihood ratio test statistic $\log \Lambda_n$ for the null hypothesis $H_0 : \theta_A = \theta_B$ for the model described by the likelihood function
$ P(A, B \ | \ \theta_A, \theta_B) = \big[ P_M(A; \theta_A) \cdot P_M(B; \theta_B) \big]$
and intuitively equals to the amount of likelihood we ``lose'' by forcing $\theta_A = \theta_B$ onto the model fitted to approximate $A$ and $B$ independently. Figure~\ref{fig:fig1_v3} illustrates that, in terms of average likelihood, the shared model (purple) is always inferior to two private models from the same class, unless two datasets are in fact just different samples from the same distribution. 

\begin{lemma}\label{lem:llrd}
The log-likelihood ratio distance is non-negative,
and the equals zero only if there exists a single ``shared'' model that approximates datasets as well as their ``private'' optimal models:

\vspace{-10px}\scalebox{0.95}{\parbox{\linewidth}{%
\begin{gather*}
    d_{\Lambda}(A, B; M) = 0 \Leftrightarrow \exists \ \theta_S  : \log P_M(A; \theta_S) = \max_{\theta} \log P_M(A; \theta) \wedge \log P(B; \theta_S) = \max_{\theta} \log P_M(B; \theta).
\end{gather*}
}}
\end{lemma}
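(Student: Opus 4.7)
The plan is to prove both claims directly from the definition by separating the joint maximum into per-dataset upper bounds. Write $L_A^\star := \max_\theta \log P_M(A;\theta)$ and $L_B^\star := \max_\theta \log P_M(B;\theta)$, and observe that the first maximum in Definition~\ref{def:llr_distance} decouples as
\begin{equation*}
\max_{\theta_A,\theta_B}\bigl[\log P_M(A;\theta_A)+\log P_M(B;\theta_B)\bigr] \;=\; L_A^\star+L_B^\star,
\end{equation*}
because the two arguments are maximized over independent variables. This identity is the anchor for everything that follows.

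For non-negativity, note that any common $\theta_S$ can be used for both datasets, so the second maximum satisfies
\begin{equation*}
\max_{\theta_S}\bigl[\log P_M(A;\theta_S)+\log P_M(B;\theta_S)\bigr] \;\le\; L_A^\star+L_B^\star,
\end{equation*}
which gives $d_{\Lambda}(A,B;M)\ge 0$ by subtracting the two. Alternatively, I would note that taking $\theta_A=\theta_B=\theta_S$ in the first expression shows the first maximum is at least the second, which is really the same observation.

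For the equivalence, I would handle the two directions separately. The $(\Leftarrow)$ direction is immediate: if some $\theta_S$ simultaneously attains $L_A^\star$ and $L_B^\star$, plugging it into the second maximum achieves the bound $L_A^\star+L_B^\star$, so the two maxima coincide and $d_\Lambda=0$. The $(\Rightarrow)$ direction is the reverse: if $d_\Lambda=0$, then the second maximum equals $L_A^\star+L_B^\star$; letting $\theta_S^\star$ realize it gives
\begin{equation*}
\log P_M(A;\theta_S^\star)+\log P_M(B;\theta_S^\star)=L_A^\star+L_B^\star,
\end{equation*}
and since each summand is individually upper bounded by the corresponding $L^\star$, equality of the sum forces equality in each summand. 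Hence $\theta_S^\star$ is optimal for both $A$ and $B$ on its own, which is exactly the required witness.

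The only real obstacle is existence of the maximizer $\theta_S^\star$ in the $(\Rightarrow)$ direction; the statement is written with $\max$ rather than $\sup$, so I would either explicitly assume the supremum is attained (consistent with the convention used implicitly in Definition~\ref{def:llr_distance}) or pass to a maximizing sequence $\theta_S^{(k)}$ and conclude that $\log P_M(A;\theta_S^{(k)})\to L_A^\star$ and $\log P_M(B;\theta_S^{(k)})\to L_B^\star$, which is enough to exhibit a common near-optimum and, under mild compactness/continuity of $M$, a true joint optimum. Everything else is a one-line inequality chase, so no further technical difficulty is expected.
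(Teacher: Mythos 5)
Your proposal is correct and follows essentially the same route as the paper's proof: decompose the joint ``private'' maximum into per-dataset optima, observe the ``shared'' maximum is the same objective over a restricted feasible set to get non-negativity, and in the equality case argue that a sum of individually upper-bounded terms attaining the sum of the bounds forces each term to attain its own bound (the paper phrases this last step as a proof by contradiction, you phrase it directly, but it is the same argument). Your remark about attainment of the maximizer is a small technical point the paper leaves implicit, but it does not change the substance.
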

\begin{proof} Follows from the fact that the shared part in the Definition \ref{def:llr_distance} is identical to the private part but over a smaller feasibility set $\{\theta_A = \theta_B\}$. See the supplementary Section \ref{sec:lemma21_proof} for the formal proof.\end{proof}



\textbf{Adversarial formulation.} If we introduce the parametric family of transformations $T(x, \phi)$ and try to find $\phi$ that minimizes the log-likelihood ratio distance $\min_{\phi} d_{\Lambda}(T(A; \phi), B; M)$, an adversarial problem arises. Note that for a fixed dataset $B$, only the first term is adversarial, and only w.r.t. $\theta_{AT}$:
\vspace{-8pt}

\scalebox{0.96}{\parbox{1.05\linewidth}{%
\begin{gather}\label{eq:llrd_with_transform}
    \begin{aligned}
    \min_{\phi, \theta_S} \max_{\theta_{AT}; \theta_B} \Big[\log P_M(T(A; \phi); \theta_{AT}) + \log  P_M(B; \theta_B) - \log  P_M(T(A; \phi); \theta_S) - \log P_M(B; \theta_S) \Big]
    \end{aligned}
\end{gather}
}}
Figure~\lref{fig:fig1_v3}{b} illustrates that minimizing this objective (\ref{eq:llrd_with_transform}) over $\theta_S$ while maximizing it over $\theta_{AT}$ corresponds to minimizing entropy (``squeezing'') of the combination of $T(A)$ and $B$ while maximizing entropy of (``expanding'') transformed dataset $T(A)$ as much as possible.



\textbf{Non-adversarial formulation.} The adversarial objective (\ref{eq:llrd_with_transform}) requires finding a new optimal model $\theta_{AT}$ for each new value of $\phi$ to find the maximal likelihood of the transformed dataset $T(A)$, but Figure~\lref{fig:fig1_v3}{a} illustrates that the likelihood of the transformed dataset can be often estimated from the parameters of the transformation $T$ alone. For example, if $T$ uniformly squeezes the dataset by a factor of two, the average maximum likelihood of the transformed dataset $\max_{\theta} \log P_M(T(A); \theta)$ doubles compared to the likelihood of the original $A$. In general, the likelihood of the transformed dataset is inversely proportional to the Jacobian of the determinant of the applied transformation. The lemma presented below formalizes this relation taking into account the limited capacity of $M$, and leads us to our main contribution: the optimal transformation can be found by simply minimizing a modified version of the objective (\ref{eq:llrd_with_transform}) using an iterative method of one's choice.

\begin{lemma}\label{lem:lrmf}
If $T(x; \phi)$ is a normalizing flow, then the first term in the objective (\ref{eq:llrd_with_transform}) can be bounded in closed form as a function of $\phi$ up to an approximation error $\mathcal E_{bias}$. The equality in (\ref{eq:lrmf_ineq}) holds when the approximation term vanishes, i.e. if $M$ approximates both $A$ and $T(A; \phi)$ equally well; $P_A$ is the true distribution of $A$ and $T[P_A, \phi]$ is the push-forward distribution of the transformed dataset.
\begin{gather}\label{eq:lrmf_ineq}
\max_{\theta_{AT}} \log P_M(T(A; \phi); \theta_{AT}) \leq \max_{\theta_{A}} \log P_M(A; \theta_{A}) - \log\det|\nabla_x T(A;\phi)| + \mathcal E_{bias}(A, T, M) \\
\mathcal E_{bias}(A, T, M) \triangleq \max_{\phi}\left[\min_{\theta} \mathcal D_{KL}(P_A; M( \theta)) - \min_{\theta} \mathcal D_{KL}(T[P_A, \phi]; M(\theta))\right] \nonumber
\end{gather}
\end{lemma}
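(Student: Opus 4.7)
The plan is to reduce the claim to the change-of-variables identity for differential entropy combined with a short supremum bookkeeping argument. The key fact I would use is the standard decomposition
\[
\max_{\theta}\,\log P_M(X;\theta)\;=\;-H(P_X)\;-\;\min_{\theta}\,\mathcal D_{KL}(P_X;\,M(\theta)),
\]
which holds whenever $\log P_M(X;\theta)$ is read (as in the paper's Notation paragraph) as the expectation $\mathbb{E}_{x\sim P_X}\log P_M(x;\theta)$. Applying this identity to both $X=A$ (with true distribution $P_A$) and $X=T(A;\phi)$ (with true distribution $T[P_A,\phi]$) and subtracting gives the exact equation
\[
\max_{\theta_{AT}}\log P_M(T(A;\phi);\theta_{AT})\;-\;\max_{\theta_A}\log P_M(A;\theta_A)\;=\;H(P_A)-H(T[P_A,\phi])\;+\;\delta(\phi),
\]
where I set $\delta(\phi):=\min_{\theta}\mathcal D_{KL}(P_A;M(\theta))-\min_{\theta}\mathcal D_{KL}(T[P_A,\phi];M(\theta))$.

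Next I would handle the entropy difference using the invertibility of the flow. The change-of-variables formula for densities implies
\[
H(T[P_A,\phi])\;=\;H(P_A)\;+\;\mathbb{E}_{x\sim P_A}\log|\det\nabla_x T(x;\phi)|,
\]
so the entropy gap collapses to exactly $-\log\det|\nabla_x T(A;\phi)|$ in the paper's abbreviated notation. Substituting this back produces the exact identity
\[
\max_{\theta_{AT}}\log P_M(T(A;\phi);\theta_{AT})\;=\;\max_{\theta_A}\log P_M(A;\theta_A)\;-\;\log\det|\nabla_x T(A;\phi)|\;+\;\delta(\phi).
\]
The inequality (\ref{eq:lrmf_ineq}) then follows from the trivial bound $\delta(\phi)\le\max_{\phi}\delta(\phi)=\mathcal E_{bias}(A,T,M)$, and it is saturated precisely when the bracket in the definition of $\mathcal E_{bias}$ vanishes at the current $\phi$, i.e.\ when the best-in-class KL divergences to $P_A$ and to $T[P_A,\phi]$ agree. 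This captures the informal claim that the bound is tight whenever $M$ fits the source and the pushforward equally well.

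The main obstacle is notational rather than technical: one must treat $\log P_M(X;\theta)$ uniformly as a population expectation so that the $-H(P_X)$ terms cancel cleanly, and one must identify the paper's shorthand $\log\det|\nabla_x T(A;\phi)|$ with the expected log-Jacobian $\mathbb{E}_{x\sim P_A}\log|\det\nabla_x T(x;\phi)|$ that naturally falls out of the entropy computation. Once that accounting is fixed, the argument is essentially a two-line consequence of the entropy-pushforward formula for invertible maps together with the definition of $\mathcal E_{bias}$ as a supremum over $\phi$; no additional approximation or estimation step is needed, which is precisely why the normalizing-flow restriction converts the implicit $\max_{\theta_{AT}}$ into an explicit closed-form function of $\phi$.
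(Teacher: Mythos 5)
Your proof is correct and follows essentially the same route as the paper's: decompose each maximal log-likelihood as negative entropy minus the minimal KL-projection error, use the change-of-variables/entropy-pushforward identity for the invertible flow to turn the entropy gap into the expected log-Jacobian, and then bound $\delta(\phi)$ by its supremum $\mathcal E_{bias}$. The only quibble is that equality holds when $\delta(\phi)$ \emph{attains} the maximum defining $\mathcal E_{bias}$ (not merely when the bracket vanishes at the current $\phi$), but this matches the informality of the lemma statement itself.
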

\begin{proof}
We expand likelihoods of combined and shared datasets given best models from $M$ into respective ``true'' negative entropies and the approximation errors due to the choice of $M$ (KL-divergence between true distributions and their KL-projections onto $M$). Then we replace the entropy of the transformed dataset with the entropy of the original and the log-determinant of the Jacobian of the applied transformation, noting that $\log\det|\nabla_x T^{-1}(T(A, \phi), \phi)| = \log\det|\nabla_x T(A, \phi)|$. We refer readers to the Section \ref{sec:lemma21_proof} of the supplementary for the full proof.
\end{proof}

By applying this lemma to the objective (\ref{eq:llrd_with_transform}) and grouping together terms that do not depend on $\theta_S$ and $\phi$, we finally obtain the final objective.


\begin{definition}[LMRF]
Let us define the log-likelihood ratio minimizing flow (LRMF) for a pair of datasets $A$ and $B$ on $\mathcal X$, the family of densities $M(\theta)$ on $\mathcal X$, and the parametric family of normalizing flows $T(x; \phi)$ from $\mathcal X$ onto itself,
as the flow $T(x; \phi^*)$ that minimizes $\mathcal L_{\text{LRMF}}$ (\ref{eq:lrmf_loss}),
where the constant
$ c(A, B)$
does not depend on $\theta_S$ and $\phi$, and can be precomputed in advance.
\hspace*{-10px}
\scalebox{0.95}{\parbox{1.1\linewidth}{%
\begin{gather}\label{eq:lrmf_loss}
\mathcal L_{\text{LRMF}}(A, B, \phi, \theta_S) = -\log\det|\nabla_x T(A; \phi)| - \log P_M(T(A; \phi); \theta_S) - \log P_M(B; \theta_S) + c(A, B), \\
c(A, B) = \max_{\theta_A} \log P_M(A; \theta_A) + \max_{\theta_B} \log P_M(B; \theta_B) \nonumber
\end{gather}
}}

\end{definition}

\begin{theorem}\label{th:lrmf_bound}
If $T(x, \phi)$ is a normalizing flow, then the adversarial log-likelihood ratio distance (\ref{eq:llrd_with_transform}) between the transformed source and target datasets can be bounded via the non-adversarial LRMF objective (\ref{eq:lrmf_loss}), and therefore the parameters of the normalizing flow $\phi$ that make $T(A, \phi)$ and $B$ equivalent with respect to $M$ can be found by minimizing the LRMF objective (\ref{eq:lrmf_loss}) using gradient descent iterations with known convergence guarantees.
\begin{equation}\label{eq:lrmf_bound}
   0 \leq d_{\Lambda}(T(A, \phi), B; M) \leq \mathcal \min_{\theta} L_{\text{LRMF}}(A, B, \phi, \theta) + \mathcal E_{bias}. 
\end{equation}

\end{theorem}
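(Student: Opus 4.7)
The plan is to chain together the two lemmas already established: Lemma~\ref{lem:llrd} delivers the lower bound $0 \leq d_{\Lambda}(T(A,\phi),B;M)$ for free, so the real work is establishing the upper bound via Lemma~\ref{lem:lrmf}. I would start by writing $d_{\Lambda}(T(A,\phi),B;M)$ in the min-max form of Definition~\ref{def:llr_distance} and observing that the inner maximum separates: $\max_{\theta_{AT},\theta_B}[\log P_M(T(A;\phi);\theta_{AT}) + \log P_M(B;\theta_B)] = \max_{\theta_{AT}}\log P_M(T(A;\phi);\theta_{AT}) + \max_{\theta_B}\log P_M(B;\theta_B)$, because the two arg-max variables appear in disjoint summands.

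Next I would apply Lemma~\ref{lem:lrmf} to replace the first of these two maxima by its upper bound $\max_{\theta_A}\log P_M(A;\theta_A) - \log\det|\nabla_x T(A;\phi)| + \mathcal E_{bias}$. Since the resulting expression is an upper bound that is uniform in $\theta_S$, monotonicity of $\min_{\theta_S}$ lets me substitute it under the outer minimum and obtain
\begin{align*}
d_{\Lambda}(T(A,\phi),B;M) \leq \min_{\theta_S} \Big[ \ & \max_{\theta_A}\log P_M(A;\theta_A) + \max_{\theta_B}\log P_M(B;\theta_B) - \log\det|\nabla_x T(A;\phi)| \\
& - \log P_M(T(A;\phi);\theta_S) - \log P_M(B;\theta_S) \Big] + \mathcal E_{bias}.
\end{align*}
Recognizing the first two summands as the constant $c(A,B)$ from~(\ref{eq:lrmf_loss}) and collecting the remaining terms gives exactly $\min_{\theta_S}\mathcal L_{\text{LRMF}}(A,B,\phi,\theta_S) + \mathcal E_{bias}$, which is the claimed upper bound. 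The lower bound $0 \leq d_{\Lambda}$ is immediate from Lemma~\ref{lem:llrd}.

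For the final clause about gradient descent convergence, I would simply note that $\mathcal L_{\text{LRMF}}$ is a single-level minimization objective in $(\phi,\theta_S)$: the log-determinant term is differentiable in $\phi$ by construction of normalizing flows, and the log-likelihood terms are jointly differentiable in $(\phi,\theta_S)$ since $T(\cdot;\phi)$ is smooth. Hence any off-the-shelf first-order stochastic optimizer inherits its standard convergence guarantees, in contrast with the min-max formulation~(\ref{eq:llrd_with_transform}) for which no such guarantees apply.

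The only subtlety I anticipate — and which I think is the main point worth being careful about — is whether the upper bound inside the $\min_{\theta_S}$ remains valid after pulling $\mathcal E_{bias}$ out as an additive constant; this is fine because $\mathcal E_{bias}$ depends only on $A$, $T$, and $M$ (it is $\max_\phi$'ed over flows and $\min_\theta$'ed over model parameters, so in particular does not involve $\theta_S$), and therefore commutes trivially with $\min_{\theta_S}$. Everything else is a bookkeeping rearrangement of the bound supplied by Lemma~\ref{lem:lrmf}.
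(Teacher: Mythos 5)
Your proposal is correct and follows essentially the same route as the paper: the lower bound comes from Lemma~\ref{lem:llrd}, and the upper bound comes from applying Lemma~\ref{lem:lrmf} to the separable inner maximum of Definition~\ref{def:llr_distance} and regrouping the $\theta_S$- and $\phi$-independent terms into $c(A,B)$. The paper only sketches this argument in a sentence, so your more explicit bookkeeping (including the observation that $\mathcal E_{bias}$ is independent of $\theta_S$ and hence commutes with the outer minimum) is a faithful expansion rather than a different proof.
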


This theorem follows from the definition of $d_{\Lambda}$ and two lemmas provided above that show that the optimization over $\theta_{AT}$ can be (up to the error term) replaced by a closed-form expression for the likelihood of the transformed dataset if the transformation is a normalizing flow. Intuitively, the LRMF loss (\ref{eq:lrmf_loss}) encourages the transformation $T$ to draw all points from $A$ towards the mode of the shared model $P(x, \theta_S)$ via the second term, while simultaneously encouraging $T$ to expand as much as possible via the first term as illustrated in Figure~\lref{fig:fig1_v3}{b}. The delicate balance is attained only when two distributions are aligned, as shown in Lemma \ref{lem:llrd}. The inequality (\ref{eq:lrmf_bound}) is tight (equality holds) only when the bias term is zero, and the shared model is optimal.


The example below shows that the affine log-likelihood ratio minimizing flow between two univariate random variables with respect to the normal density family $M$ corresponds to shifting and scaling one variable to match two first moments of the other, which agrees with our intuitive understanding of what it means to make two distributions ``indistinguishable'' for  the Gaussian family. 

\begin{tcolorbox}
\begin{example}\label{ex:gaussian_lrmf}
{\small
Let us consider two univariate normal random variables $A, B$ with moments $\mu_A, \mu_B, \sigma^2_A, \sigma^2_B$, restrict $M$ to normal densities, and the transform $T(x; \phi)$ to the affine family: $T(x; a, b) = ax + b$, i.e. $\theta = (\mu, \sigma)$ and $\phi = (a, b)$. Using the expression for the maximum log-likelihood (negative entropy) of the normal distribution, and the expression for variance of the equal mixture, we can solve the optimization over $\theta_S = (\mu_S, \sigma_S)$ analytically:
\scalebox{0.98}{\parbox{1\linewidth}{%
\begin{gather*}
    \min_{\mu, \sigma} \mathbb E_{X} \log P(X; \mu, \sigma) = - \frac{1}{2}\log(2\pi e \sigma_X^2) = - \log \sigma_X + C \\
    \min_{\theta_{S}} \Big[ - \log P_M(T(A; \phi); \theta_{S}) - \log P_M(B; \theta_{S}) \Big]
    = \log\Big(\frac{1}{2}(a^2 \sigma_A^2 + \sigma_B^2) - \frac{1}{4}(\mu_A + b - \mu_B)^2 \Big) - 2C.
\end{gather*}
}}
Combining expressions above gives us the final objective that can be solved analytically by setting the derivatives with respect to $a$ and $b$ to zero:
\begin{gather*}
\log\det|\nabla_x T(A; \phi)| = \log a \quad \text{ and } \quad c(A, B) = - \log \sigma_A - \log \sigma_B + 2C, \\
\mathcal L_{\text{LRMF}} = - \log a + \log\Big(\frac{1}{2}(a^2 \sigma_A^2 + \sigma_B^2) - \frac{1}{4}(\mu_A + b - \mu_B)^2 \Big) - \log \sigma_A - \log \sigma_B \\
a^* = \frac{\sigma_{B}}{\sigma_A}, \ \ \ b^* = \mu_B - \mu_A .
\end{gather*}
The error term $\mathcal E_{bias}$ equals zero because any affine transformation of a Gaussian is still a Gaussian.
}
\end{example}
\end{tcolorbox}

\textbf{Relation to Jensen-Shannon divergence and GANs.} From the same expansion as in the proof of Lemma \ref{lem:lrmf} and the information-theoretic definition of the Jensen-Shannon divergence (JSD) as the difference between entropies of individual distributions and their equal mixture, it follows that the likelihood-ratio distance (and consequently LRMF) can be viewed as biased estimates of JSD.
\begin{equation*}
    d_{\Lambda}(A, B) = 2\cdot\operatorname{JSD}(A, B) - \mathcal D_{KL}(A, M) - \mathcal D_{KL}(B, M) + 2\cdot \mathcal D_{KL}((A+B)/2, M)
\end{equation*}
Also, if the density family $M$ is ``convex'', in a sense that for any two densities from $M$ their equal mixture also lies in $M$,
then by rearranging the terms in the definition of the likelihood-ratio distance, and noticing that the optimal shared model is the equal mixture of two densities, it becomes evident that the LRMF objective is equivalent to the GAN objective with the appropriate choice of the discriminator family:
\begin{gather*}
    \min_{T} d_{\Lambda}(T(A), B, M) = \min_{T} \max_{\theta_{AT}, \theta_B} \min_{\theta_S} \left[\log \frac{P_M(T(A); \theta_{AT})}{P_M(T(A); \theta_S)} + \log \frac{P_M(B; \theta_B)}{P_M(B; \theta_S)}\right] \\ 
    = \min_{T} \max_{\theta_{AT}, \theta_B} \left[\log \frac{P_M(T(A); \theta_{AT})}{P_M(T(A); \theta_{AT}) + P_M(T(A); \theta_B)} + \log \frac{P_M(B; \theta_B)}{P_M(B; \theta_{AT}) + P_M(B; \theta_B)} + \log 4\right] \\
    = \min_{T} \max_{D \in \mathcal H} \big[\log D(T(A)) + \log \left(1 - D(B)\right) + \log 4\big], \quad \mathcal H(\theta, \theta') = \left\{\frac{P_M(x; \theta)}{P_M(x; \theta) + P_M(x; \theta')} \right\}.
\end{gather*}
 
Since $M$ is not ``convex'' in most cases, minimizing the LRMF objective is equivalent to adversarially aligning two datasets against a regularized discriminator. From the adversarial network perspective, the reason why $\mathcal L_{\text{LRMF}}$ manages to solve this min-max problem using plain minimization is because for any flow transformation parameter $\phi$ the optimal discriminator between $T(A; \phi)$ and $B$ is defined in closed form:  $D^*(x, \phi) = P_M(x; \theta_B^*)/\big(P_M(x; \theta_B^*) + P_M(T^{-1}(x; \phi); \theta_A^*)\det|\nabla_x T^{-1}(x; \phi)|\big)$. 

\textbf{Vanishing of generator gradients.} The relation presented above suggests that the analysis performed by \citet{Arjovsky2017TowardsPM} for GANs (Theorem~2.4, page~6) applies to LRMF as well, meaning that gradients of the LRMF objective w.r.t. the learned transformation parameters might vanish in higher dimensions. This implies that while the inequality (\ref{eq:lrmf_bound}) always holds, the model produces a useful alignment only when a sufficiently ``deep'' minimum of the LRMF loss (\ref{eq:lrmf_loss}) is found, otherwise the method fails, and the loss value should be indicative of this. An example presented below shows that reaching this deep minimum becomes exponentially more difficult as the distance between distributions grows, which is often the case in higher dimensions. 

\begin{tcolorbox}
\begin{example}\label{ex:mixture_lrmf_gradient_failure}
{\small
Consider $M(\theta)$ that parameterizes all equal mixtures of two univariate Gaussians with equal variances, i.e. $\theta = (\mu_s^{(1)}, \mu_s^{(2)}, \sigma_s^2)$ and $P_M(x \ | \ \theta) = \frac{1}{2}\left(\mathcal N(x | \mu_s^{(1)}, \sigma_s^2) + \mathcal N(x | \mu_s^{(2)}, \sigma_s^2)\right)$. Consider $A$ sampled from $M(\mu+\delta, \mu-\delta, \sigma_0^2)$ and $B_{\mu}$ sampled from $M(\delta, -\delta, \sigma_0^2)$ for some fixed $\delta, \mu$ and $\sigma_0$. Let transformations be restricted to shifts $T(x; b) = x + b$, so $\phi = b$, and $\log\det|\nabla_x T(x; \phi)| = 0$, and $\mathcal E_{bias} = 0$ since $M$ can approximate both $A$ and $T(A; b)$ perfectly for any $b$. For a sufficiently large $\mu$, optimal shared model parameters can be found in closed form: $\theta^* = (\mu + b, 0, \sigma_0^2 + \delta^2)$. This way the LRMF loss can be computed in closed form up to the cross-entropy: $L(b, \mu) := \min_{\theta} \mathcal L_{\text{LRMF}}(A, B_{\mu}, b, \theta) = -2 H[M(\delta, -\delta, \sigma_0^2), M(\mu + b, 0, \sigma_0^2 + \delta^2)] + C$. A simulation provided in the supplementary Section \ref{sec:numerical_example_mixture_lrmf} shows that the norm of the gradient of the LRMF objective decays exponentially as a function of $\mu$: $\left\lVert [\partial L(b, \mu) / \partial \mu] (0, \mu) \right\rVert \propto \exp(-\mu^2)$, meaning that as $A$ and $B_{\mu}$ become further, the objective quickly becomes flat w.r.t $\phi$ near the initial $\phi_{t=0} = 0$.
}
\end{example}
\end{tcolorbox}


\textbf{Model complexity.} We propose the following intuition: 1) chose the family $M(\theta)$ that gives highest validation likelihood on $B$, since at optimum the shared model has to approximate the true underlying $P_B$ well; 2) chose the family $T(x; \phi)$ that has fewer degrees of freedom then $M$, since otherwise the problem becomes underspecified. For example, consider $M$ containing all univariate Gaussians parameterized by two parameters $(\mu, \sigma)$ aligned using polynomial transformations of the form $T(x; a_0, a_1, a_2) = a_2 x^2 + a_1 x + a_0$. In Example \ref{ex:gaussian_lrmf} we showed that Gaussian LRMF is equivalent to moment matching for two first moments, but with this choice of $T$, there exist infinitely many solutions for $\phi$ that all produce the desired mean and variance of the transformed dataset. 


\section{Related work}
In this section we summarize the prior work on addressing domain adaptation as distribution alignment, recent advances in modeling probability densities using normalizing flows, and prior attempts at applying flows to domain adaptation and distribution discrepancy estimation.

\textbf{Domain Adaptation.} \citet{ben2010theory} showed that the test error of the learning algorithm trained and tested on samples from different distributions labeled using a shared ``ground truth'' labeling function is bounded by the ${\mathcal H\Delta\mathcal H}$-distance between the two distributions, therefore framing domain adaptation as distribution alignment. This particular distance is difficult to estimate in practice, so early neural feature-level domain adaptation methods such as deep domain confusion \cite{tzeng2014deep}, DAN \cite{long2015learning} or JAN \cite{long2017deep} directly optimized estimates of non-parametric statistical distances (e.g. maximum mean discrepancy) between deep features of data points from two domains. Other early neural DA methods approximated domain distributions via simple parametric models, for example DeepCORAL \cite{sun2016deep} minimizes KL-divergence between pairs of Gaussians. Unfortunately, these approaches struggle to capture the internal structure of real-world datasets. Adversarial (GAN-based) approaches, such as ADDA \cite{ganin2016domain} and DANN \cite{tzeng2017adversarial}, address these limitations using deep convolutional domain discriminators. However, adversarial models are notoriously hard to train and provide few automated domain-agnostic convergence validation and model selection protocols, unless ground truth labels are available. Many recent improvements in the performance of classifiers adapted using adversarial alignment rely techniques utilizing source labels, such as semantic consistency loss \cite{hoffman2018cycada}, classifier discrepancy loss \cite{saito2018maximum}, or pseudo-labeling \cite{french2018selfensembling}, added on top of the unsupervised adversarial alignment. The comparison to methods that use source labels is beyond the scope of this work, since we are primarily interested in improving the robustness of the underlying alignment method.

\textbf{Normalizing Flows.} The main assumption behind normalizing flows \cite{rezende2015variational} is that the observed data can be modeled as a simple distribution transformed by an unknown invertible transformation. Then the density at a given point can be estimated using the change of variable formula by estimating the determinant of the Jacobian of that transformation at the given point. The main challenge in developing such models is to define a class of transformations that are invertible, rich enough to model real-world distributions, and simple enough to enable direct estimation of the aforementioned Jacobian determinant. Most notable examples of recently proposed normalizing flows include Real NVP \cite{dinh2016density}, GLOW \cite{kingma2018glow} built upon Real NVP with more general learnable permutations and trained at multiple scales to handle high resolution images, and the recent FFJORD \cite{grathwohl2018ffjord}, that used forward simulation of an ODE with an velocity field parameterized by a neural network as a flow transformation.

\textbf{Composition of inverted flows.} AlignFlow \cite{grover2019alignflow} is built of two flow models $G$ and $F$ trained on datasets $A$ and $B$ in the ``back-to-back'' composition $F \circ G^{-1}$ to map points from $A$ to $B$. We argue that the structure of the dataset manifold is destroyed if two flow are trained independently, since two independently learned ``foldings'' of lower-dimensional surfaces into the interior of a Gaussian ball are almost surely ``incompatible'' and render correspondences between $F^{-1}(B)$ and $G^{-1}(A)$ meaningless. \citet{grover2019alignflow} suggests to share some weights between $F$ and $G$, but we propose that this solution does not addresses the core of the issue. \citet{yang2019pointflow} showed that PointFlow - a variational FFJORD trained on point clouds of mesh surfaces - can be used to align these point clouds in the $F \circ G^{-1}$ fashion. But the point correspondences found by the PointFlow are again due to the spatial co-occurrence of respective parts of meshes (left bottom leg is always at the bottom left) and do not respect the structure of respective surface manifolds. Our approach requires 2-3 times more parameters then our composition-based baselines, but in the next section we show that it preserves the local structure of aligned domains better, and the higher number of trainable parameters does not cause overfitting.

\textbf{CycleGAN with normalizing flows.} RevGAN \cite{van2019reversible} used GLOW \cite{kingma2018glow} to enforce the cycle consistency of the CycleGAN, and left the loss and the adversarial training procedure unchanged. We believe that the normalizing flow model for dataset alignment should be trained via maximum likelihood since the ability to fit rich models with plain minimization and validate their performance on held out sets are the primary selling points of normalizing flows that should not be dismissed.

\textbf{Likelihood ratio testing for out-of-distribution detection.} \citet{nalisnick2018deep} recently observed that the average likelihood is not sufficient for determining whether the given dataset came from the same distribution as the dataset used for training the density model. A recent paper by \citet{LLR_NIPS2019_9611} suggested to use log-likelihood ratio test on LSTMs to \textit{detect} distribution discrepancy in genomic
sequences, whereas we propose a \textit{non-adversarial} procedure for \textit{minimizing} this measure of discrepancy using unique properties of normalizing flows.



\section{Experiments and Results} 

In this section, we present experiments that verify that minimizing the proposed LRMF objective (\ref{eq:lrmf_loss}) with Gaussian, RealNVP, and FFJORD density estimators does indeed result in dataset alignment. We also show that both under- and over-parameterized LRMFs performed well in practice, and that resulting flows preserved the local structure of aligned datasets better then non-parametric objectives and the AlignFlow-inspired \cite{grover2019alignflow}  baseline, and were overall more stable then parametric adversarial objectives. We also show that the RealNVP LRMF produced a semantically meaningful alignment in the embedding space of an autoencoder trained simultaneously on two digit domains (MNIST and USPS) and preserved the manifold structure of one mesh surface distribution mapped to the surface distribution of a different mesh. We provide Jupyter notebooks with code in JAX \cite{jax2018github} and TensorFlow Probability (TFP) \cite{DBLP:journals/corr/abs-1711-10604}. 

\textbf{Setup 1: Moons and blobs}. We used LRMF with Gaussian, Real NVP, and FFJORD densities $P_M(x; \theta)$ with affine, NVP, and FFJORD transformations $T(x; \phi)$ respectively to align pairs of moon-shaped and blob-shaped datasets. The blobs dataset pair contains two samples of size $N=100$ from two Gaussians. The moons dataset contains two pairs of moons rotated $50^{\circ}$ relative to one another. We used original hyperparameters and network architectures from Real NVP \cite{dinh2016density} and FFJORD \cite{grathwohl2018ffjord}, the exact values are given in the supplementary. We also measured how well the learned LRMF transformation preserved the local structure of the input compared to other common minimization objectives (EMD, MMD) and the ``back-to-back'' composition of flows using a 1-nearest neighbor classifier trained on the target and evaluated on the transformed source. We also compared our objective to the adversarial network with spectral normalized discriminator (SN-GAN) in terms of how well their alignment quality can be judged based on the objective value alone.

\begin{figure}[t]
\begin{center}
\vspace{-40px}
\includegraphics[trim=0 230 0 0,width=\textwidth]{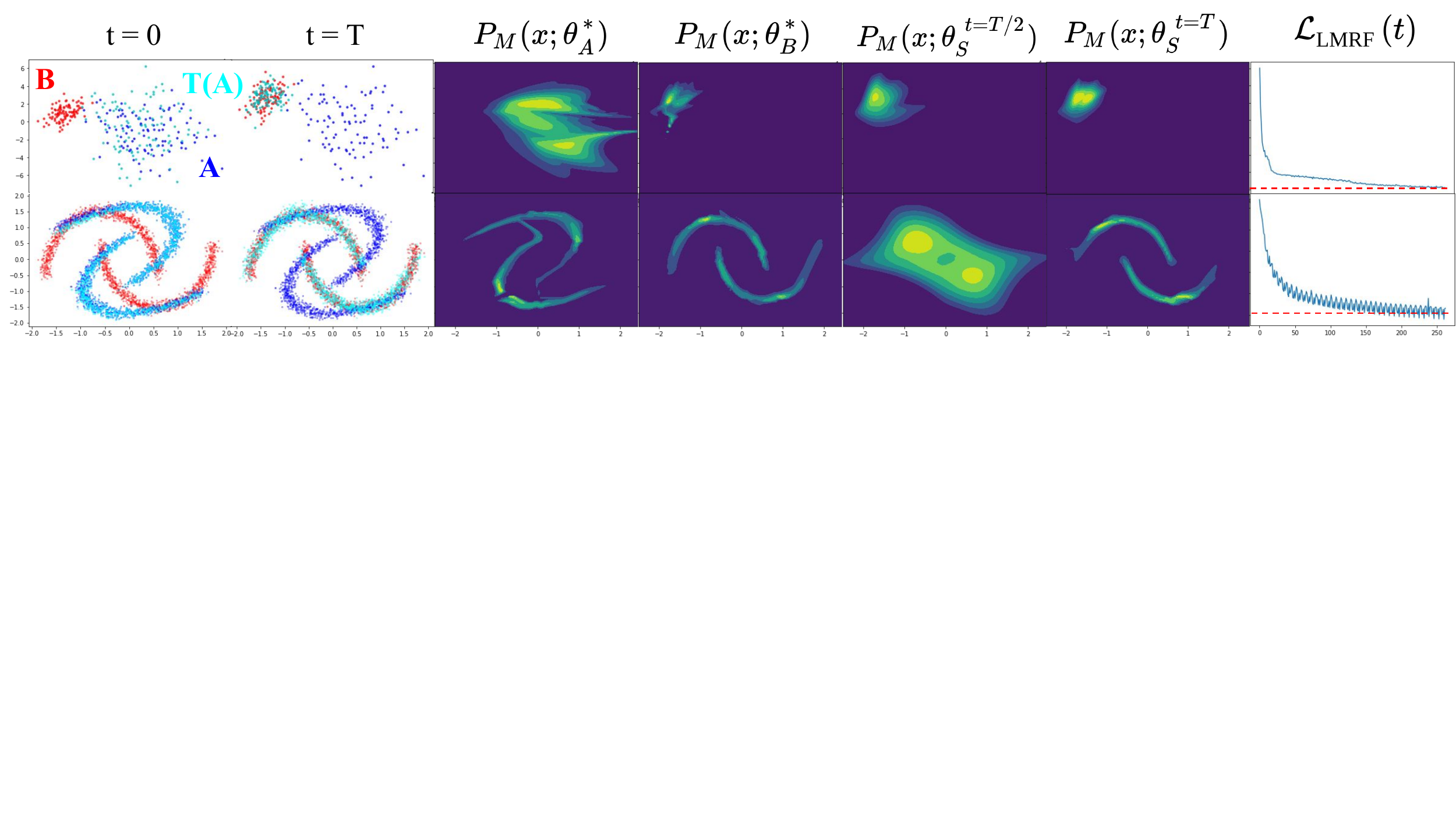}
\caption{\textbf{The dynamics of training a Real NVP LRMF on the blob (first row) and moons (second row) datasets}. Blue, red and cyan points represent $A, B$ and $T(A)$ respectively. First two columns show $T(A)$ before and after training. Third and forth columns shows optimal models from $M$ for $A$ and $B$. Fifth and sixth columns show the evolution of the shared model. The last column shows the LRMF loss over time. Even a severely \textit{overparameterized} LRMF does a good job at aligning blob distributions. The animated version that shows the evolution of respective models is available on the project web-page \href{http://ai.bu.edu/lrmf}{ai.bu.edu/lrmf}. \textit{Best viewed in color}.} \label{fig:lrmf_real_nvp_short}
\end{center}


\includegraphics[trim=0 290 0 0,width=\textwidth]{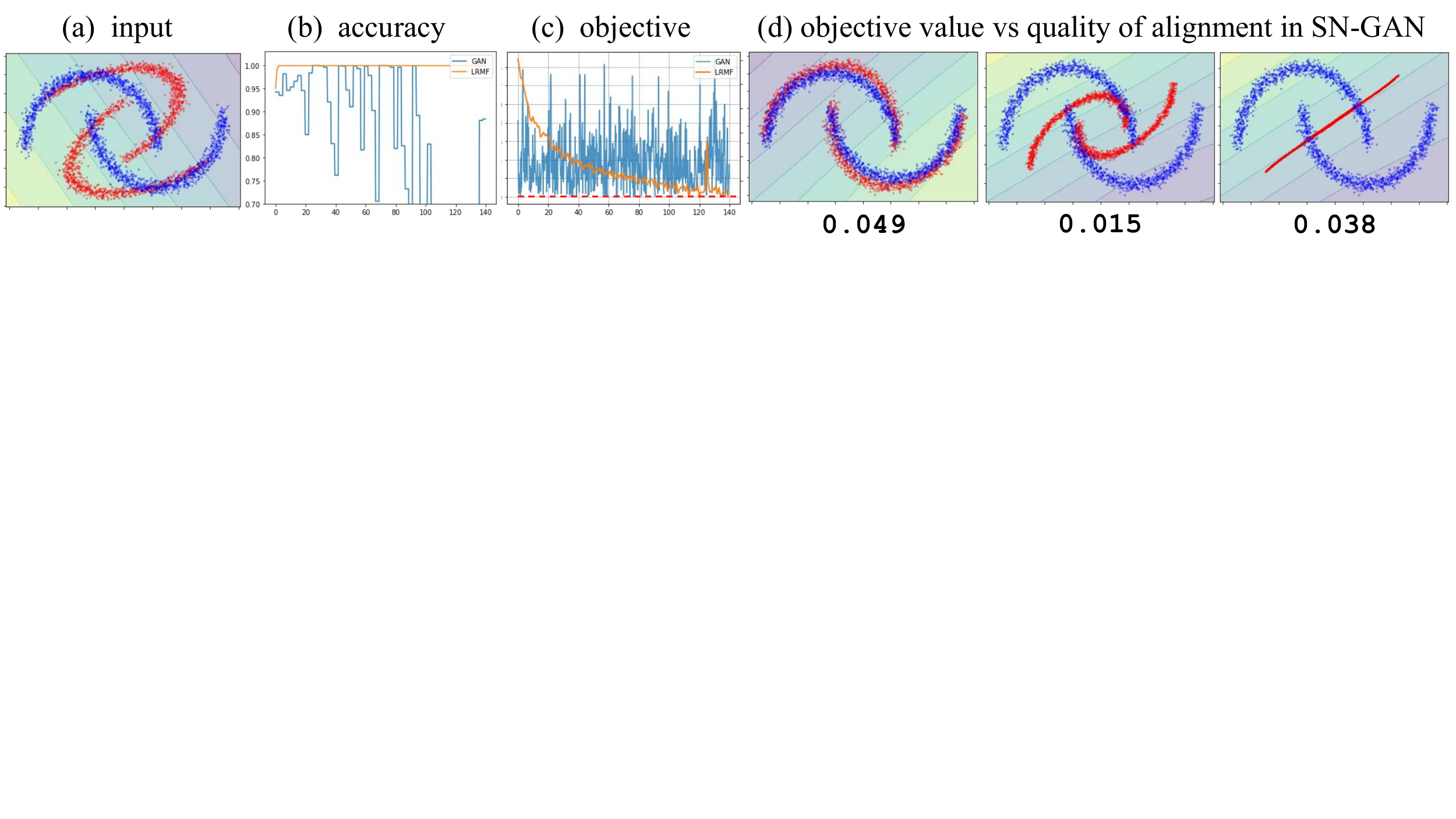}
\caption{\textbf{The dynamics of training a GAN with Spectral Normalization (SN-GAN) on the moons dataset}.  The adversarial framework provides means for aligning distributions against rich families of parametric discriminators, but requires the right choice of learning rate and an external early stopping criterion, because the absolute value of the adversarial objective (blue) is not indicative of the actual alignment quality even in low dimensions. The proposed LRMF method (orange) can be solved by plain minimization and converges to zero. \label{fig:sngan_moons} \vspace{-5px}}

\end{figure}

\begin{figure*}[t]
\centering
\vspace{-40px}
\includegraphics[clip,trim=50 180
50 0,width=\textwidth]{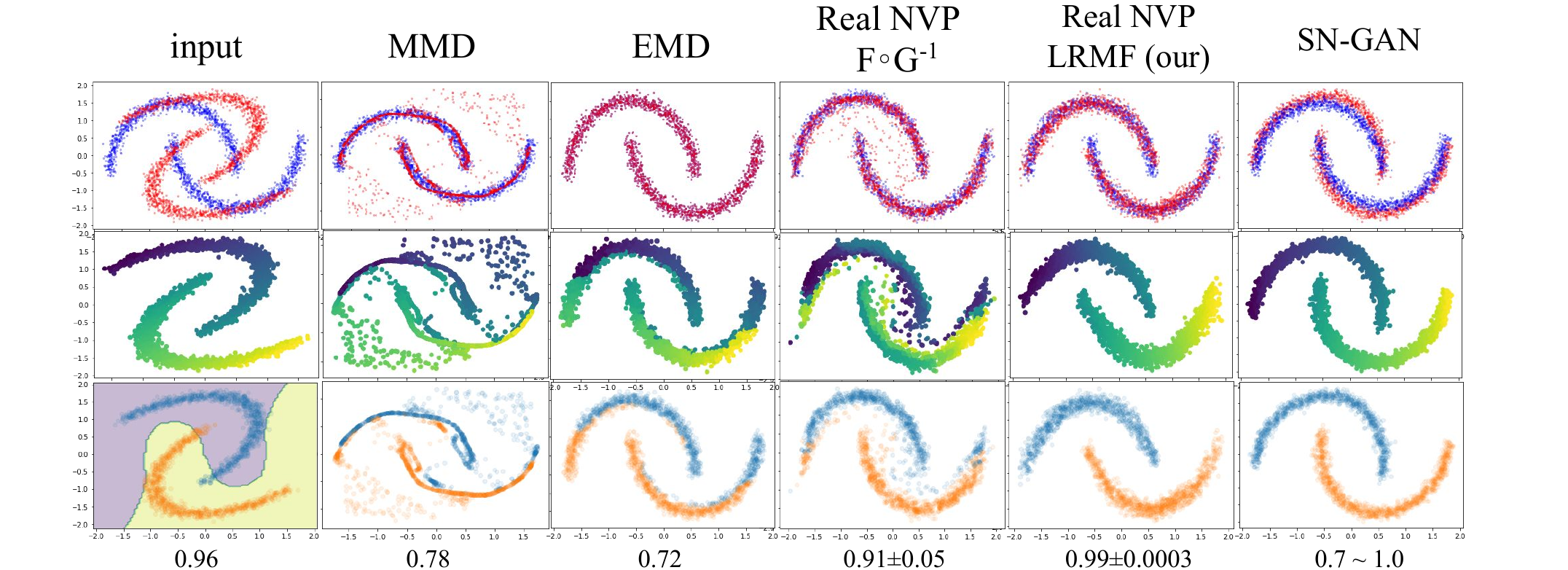}
\caption{\textbf{Among the non-adversarial alignment objectives, only LRMF preserves the manifold structure of the transformed dataset}. Each domain contains two moons. The top row shows how well two domains (red and blue) are aligned by different methods trained to transform the red dataset to match the blue dataset. The middle row shows new positions of points colored consistently with the first column. The bottom row shows what happens to red moons after the alignment. Numbers at the bottom of each figure show the accuracy of the 1-nearest neighbor classifier trained on labels from the blue domain and evaluated on transformed samples from the red domain. The animated version is available on the project web-page \href{http://ai.bu.edu/lrmf}{http://ai.bu.edu/lrmf}. \label{fig:local_structure}}

\includegraphics[clip,trim=0 260 0 0,width=\textwidth]{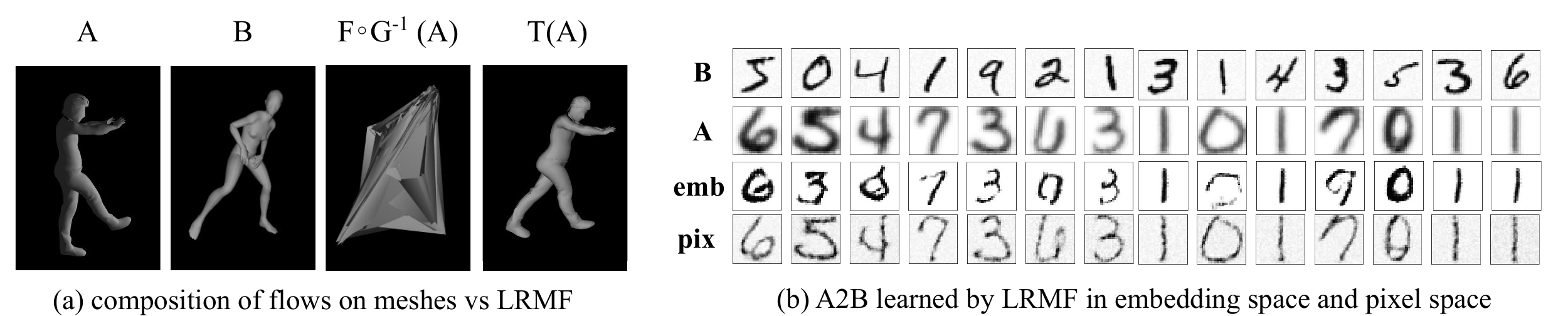}
\caption{\textbf{RealNVP LRMF successfully semantically aligned digits and preserved the local structure of the mesh surface manifold}. \textbf{(a)} The marginal distribution produced by the ``back-to-back'' composition $F \circ G^{-1}$ of two normalizing flows trained on vertices of two meshes matches the point distribution of $B$, but the local structure of the original manifold is distorted, while LRMF preserves the local structure.  \textbf{(b)} USPS digits (\textit{B}) transformed into MNIST digits (\textit{A}) via LRMF in VAE embedding space (\textit{emb}), via LRMF in pixel space (\textit{pix}). \vspace{-20px} \label{fig:lrmf_digits_mesh}}

\end{figure*}

\textbf{Results.} 
In agreement with Example~\ref{ex:gaussian_lrmf}, affine Gaussian LRMF matched first two moments of aligned distributions (Figure~\ref{fig:lrmf_gaussian}). In Real NVP (Figures \ref{fig:lrmf_real_nvp_short},\ref{fig:lrmf_real_nvp}) and FFJORD (Figure \ref{fig:lrmf_ffjord}) experiments the shared model converged to $\theta^*_B$ gradually ``enveloping'' both domains and pushing them towards each other. In both under-parameterized (Gaussian LRMF on moons) and over-parameterized (RealNVP LRMF on blobs) regime our loss successfully aligns distributions. In all experiments, the LRMF loss converged to zero in average (red line), so $\mathcal E(A, T, M) \approx 0$, meaning that affine, Real NVP, and FFJORD transformations keep input distributions ``equally far`` from $M$. The loss occasionally dropped below zero because of the variance in mini-batches.  Figure~\ref{fig:local_structure} shows that, despite good marginal alignment (top row) produced by MMD, EMD, and the $F \circ G^{-1}$ composition (inspired by AlignFlow \cite{grover2019alignflow}), the alignment produced by LRMF preserved the local structure of transformed distributions better, comparably to the SN-GAN both qualitatively (color gradients remain smooth in the middle row) and quantitatively in terms of adapted 1-NN classifier accuracy (bottom row). We believe that LRMF and SN-GAN preserved the local structure of presented datasets better than non-parametric models because assumptions about aligned distributions are too general in the non-parametric setting (overall smoothness, etc.), i.e. parametric models (flows, GANs) are better at capturing structured datasets. At the same time, Figure~\ref{fig:sngan_moons} shows that the quality of the LRMF alignment can be judged from the objective value (orange line) and stays at optima upon reaching it, while SN-GAN's performance (blue) can be hardly judged from the value of its adversarial objective and diverges even from near-optimal configurations.

\textbf{Setup 2: Meshes.} We treated vertices from two meshes as samples from two mesh surface point distributions and aligned them. After that, we draw faces of the original mesh at new vertex positions. We trained two different flows $F$ and $G$ on these surface distributions, and passed one vertex cloud through their back-to-back composition, and compared this with the result obtained using LRMF.

\textbf{Results.} Figure~\lref{fig:lrmf_digits_mesh}{a} shows that, in agreement with the previous experiment, the number of points in each sub-volume of $B$ matches the corresponding number in the transformed point cloud $F(G^{-1}(A))$, but drawing mesh faces reveals that the local structure of the original mesh surface manifold is distorted beyond recognition. The LRMF alignment (fourth column) better preserves the local structure of the original distribution - it rotated and stretched $A$ to align the most dense regions (legs, torso, head) with the most dense regions of $B$.

\begin{figure*}[t]
\begin{center}
\includegraphics[clip,trim=0 185 0 0,width=\textwidth]{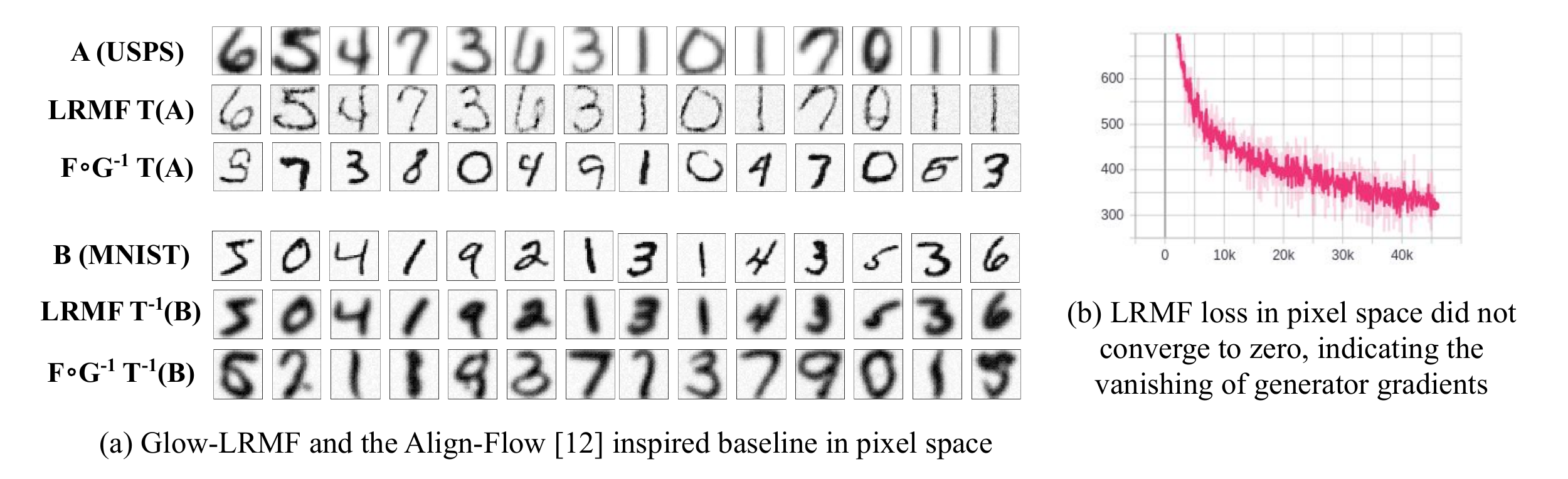}
\caption{\textbf{GLOW-LRMF did not converge in pixel space, but preserved class labels much better then the AlignFlow-inspired baseline \cite{grover2019alignflow}. } \textbf{(a)} Images generated by applying learned flow models in forward and backward direction to USPS and MNIST respectively. \textbf{(b)} GLOW-LRMF loss did not converge to zero due to the vanishing gradients in higher dimensions (pixel space). This failure mode can be detected by looking at the loss values alone. \vspace{-20px} \label{fig:lrmf_digits_only}}
\end{center}
\end{figure*}

\textbf{Setup 3: Digit embeddings}. We trained a VAE-GAN to embed unlabeled images from USPS and MNIST into a shared 32-dimensional latent space. We trained a Real NVP LRMF to map latent codes of USPS digits to latent codes of MNIST. We also trained digit label classifiers on images obtained by decoding embeddings transformed using LRMF, CORAL, and EMD and applied the McNemar test of homogeneity \cite{mcnemar1947note} to the contingency tables of prediction made by these classifiers.

\textbf{Results}. The LRMF loss attained zero. Figure~\lref{fig:lrmf_digits_mesh}{b}(emb) shows that LRMF semantically aligned images form two domains. 
Classifiers trained on images transformed using LMRF had higher accuracy on the target dataset (.55 for LRMF vs .47 for EMD vs .48 for CORAL). McNemar test showed that LRMF’s improvements in accuracy were significant (p-value $\ll$ 1e-3 in all cases).

\textbf{Setup 4: GLOW.} We trained a GLOW LRMF to align USPS and MNIST in 32x32 pixel space, visualized outputs of the forward and backward transformation, and the LRMF loss value over training iterations.

\textbf{Results.} The model learned to match the stroke width across domains, but did not make images completely indistinguishable (Figure \ref{fig:lrmf_digits_only}). The shared density model converged to the local minima that corresponds to approximating $T(A)$ and $B$ as two distinct ``bubbles'' of density that fail to merge. This is the same failure mode we illustrated in Example \ref{ex:mixture_lrmf_gradient_failure} where two components of the shared model get stuck approximating datasets that are too far away, and fail to bring the model into the deeper minima. We would like to note that even though the loss did not converge to zero, i.e. the model failed to find a marginally perfect alignment, it did so \textit{not silently}, in stark contrast with adversarial methods that typically fail silently. These results agree with our hypothesis about vanishing transformation gradients in higher dimensions (end of Section 2), resulting in vast flat regions in the LRMF loss landscape with respect to the transformation parameter $\phi$, obstructing full marginal alignment. The AligFlow-inspired \cite{grover2019alignflow} composition of flows ($F\circ G^{-1}$ in Figure \ref{fig:lrmf_digits_only}), on the other hand, produced very good marginal alignment, judging from the fact that transformed images look very much like MNIST and USPS digits, but erased the semantics in the process, since there is often some mismatch present between classes of original and transformed digits.

\section{Conclusion and Future Work}
In this paper, we propose a new alignment objective parameterized by a deep density model and a normalizing flow that, when converges to zero, guarantees that the density model fitted to the transformed source dataset is optimal for the target and vice versa. We also show that the resulting model is robust to model misspecification and preserves the local structure better than other non-adversarial objectives. We showed that minimizing the proposed objective is equivalent to training a particular GAN, but is not subject to mode collapse and instability of adversarial training, however in higher dimensions, is still affected by the vanishing of generator gradients. Translating recent advances in dealing with the vanishing of generator gradients, such as instance noise regularization \cite{Arjovsky2017TowardsPM,roth2017stabilizing,sonderby2016amortised}, to the language of likelihood-ratio minimizing flows offers an interesting challenge for future research.

\section{Acknowledgements}
This work was partially supported by NSF award \#1724237, DARPA and Google.
\section{Broader Impact}

Many recent advances in deep learning rely heavily on large labeled datasets. Unfortunately, in many important problem domains, such as medical imaging, labeling costs and high variability of target environments, such as differences in image capturing medical equipment, prohibit widespread adoption of these novel deep image processing techniques.

Our work proposes a deep domain adaptation method that brings together verifiable convergence, as in older non-parametric methods, and meaningful priors over the structure of aligned datasets from deep adversarial alignment models. 

Of course, none of aforementioned advancements can guarantee perfect semantic alignment, therefore manual evaluation in critical applications, such as medical diagnosis, is still required. However, improved interpretability that comes from having a single minimization objective would definitely ease the adoption of such methods by making validation and model selection more straightforward, as well as reducing the chance of deploying a silently failing model due to human evaluator error.

As with the majority of deep models, our model might be susceptible to adversarial attacks by malicious agents, as well as privacy-related attacks, but properly addressing these issues, their consequences, and defence techniques goes far beyond the scope of this paper.


\clearpage

\bibliography{00_bib}
\bibliographystyle{plainnat}

\clearpage
\section{Supplementary Material}

\subsection{Pseudo-code for the learning algorithm}

\begin{algorithm}
\setstretch{1.4}
\caption{Mini-batch training of Log-Likelihood Ratio Minimizing Flow}
\textbf{inputs:} datasets $A$ and $B$; normalizing flow model $T(x; \phi)$; density model $P_M(x; \phi)$; learning~rate $\eta$; thresholds $(\epsilon, \varepsilon)$; batch size $b$; initial parameters values $\phi^{(0)},\theta_A^{(0)},\theta_B^{(0)}\theta_{AT}^{(0)},\theta_S^{(0)}$\;
\textbf{outputs:} convergence indicator $I_c$; weights $\phi^*$ that make $T(A; \phi^*)$ and $B$ equivalent w.r.t. M\;
\ForEach{$X \in \{A, B\}$}{
$t \leftarrow 0$ \tcp*{first learn optimal models $\theta_A^{*}, \theta_B^{*}$}
\While{$|| \nabla_{\theta} \log P_M(X; \theta_X^{(t)})|| \geq \varepsilon $}{
    $ x^{(t)} \leftarrow $ draw batch of size $b$ from $X$\;
    $\theta_X^{(t+1)} \leftarrow \theta_X^{(t)} + \eta \cdot \nabla_{\theta} \log P_M(x^{(t)}; \theta_X^{(t)}) $\;
    $t \leftarrow t + 1$;\
}
$\theta^{*}_X \leftarrow \theta_X^{(t)}$
}
$t \leftarrow 0$ \tcp*{now train LRMF}
\While{$||g_T^{(t)}|| + ||g_S^{(t)}|| \geq \varepsilon $}{
    $ A^{(t)} \leftarrow $ draw batch of size $b$ from $A$\;
    $ B^{(t)} \leftarrow $ draw batch of size $b$ from $B$\;
    $g_S^{(t)} \leftarrow \nabla_{\theta} \left[ \log P_M(T(A^{(t)}; \phi^{(t)}); \theta_S^{(t)}) + \log P_M(B^{(t)}; \theta_S^{(t)}) \right] $\;
    $g_T^{(t)} \leftarrow \nabla_{\phi} \left[ \log P_M(T(A^{(t)}; \phi^{(t)}); \theta_S^{(t)}) + \log \det | \nabla_x T(A^{(t)}; \phi^{(t)}) | \right] $\;
    $\theta_S^{(t+1)} \leftarrow \theta_S^{(t)} + \eta \cdot g_S^{(t)}$\;
    $\phi^{(t+1)} \leftarrow \phi^{(t)} + \eta \cdot g_T^{(t)}$\;
    $t \leftarrow t + 1$;\
}
$\theta^{*}_S \leftarrow \theta_S^{(t)}$\;
$\phi^{*} \leftarrow \phi^{(t)}$\;
$c_{AB} \leftarrow \log P_M(A;\theta^{*}_A) + \log P_M(B;\theta^{*}_B) $ \tcp*{and check convergence}
$\mathcal L_{\text{LRMF}} \leftarrow c_{AB} - \log P_M(T(A; \phi^*); \theta_S^*) - \log P_M(B; \theta_S^*) - \log \det | \nabla_x T(A; \phi^*) |$\;
\lIf{$\mathcal L_{\text{LRMF}} \geq \epsilon$}{$I_c \leftarrow $ failed}
\lElse{$I_c \leftarrow $ succeeded}
\Return $\left(I_c, \ \phi^*\right)$

\end{algorithm}

  


  

\subsection{Attached code}

Attached IPython notebooks were tested to work as expected in Colab. The JAX version (\verb|lrmf_jax_public.ipynb|) includes experiments on 1D and 2D Gaussians and Real NVP, the Tensorflow Probabiliy (TFP) version (\verb|lrmf_tfp_public.ipynb|) includes experiments on Real NVP and FFJORD. Files \verb|vae_gan_public.ipynb, lrmf.py| and \verb|lrmf_glow_public.ipynb| contain code we used for VAE-GAN training and GLOW LRMF training

\newpage

\subsection{Hyper-parameters}
\textbf{Data.} Blobs datasets were samples from 2-dimensional Gaussians with parameters

\begin{align*}
\mu_A = \begin{bmatrix} 1.0 \\ 1.0 \end{bmatrix}, & \ \Sigma_A^{-\frac{1}{2}} = \begin{bmatrix} 0.5 & 0.7 \\ -0.5 & 0.3 \end{bmatrix} \\
\mu_B = \begin{bmatrix} 4.0 \\ -2.0 \end{bmatrix}, & \ \Sigma_B^{-\frac{1}{2}} = \begin{bmatrix} 0.5 & 3.0 \\ 3.0 & -2.0 \end{bmatrix}.
\end{align*}

The moons dataset contains two pairs of moons rotated $50^{\circ}$ relative to one another generate via \verb|sklearn.datasets.make_moons| with $\varepsilon=0.05$ containing 2000 samples each.

\textbf{Model.} 
In the affine LRMF with Gaussian density experiment (Figure~\ref{fig:lrmf_gaussian}), we parameterized the positive-definite transformation as $T(x, A, b) = A^T A\cdot x + b$ and the Gaussian density with parameters $(\mu, \Sigma^{-\frac{1}{2}})$ to ensure that $\Sigma$ is always positive definite as well. In Real NVP experiments we stacked four NVP blocks (spaced by permutations), each block parameterized by a dense neural network for predicting shift and scale with two 512-neuron hidden layers with ReLUs (the ``default'' Real NVP). In VAE-GAN experiments we trained a VAE-GAN on In FFJORD experiments we stacked two FFJORD transforms parameterized by DNN with \verb|[16, 16, 16, 2]| hidden layers with hyperbolic tangent non-linearities. For the GLOW experiment we stacked three GLOW transformations at different scales each with eight affine coupling blocks spaced by act norms and permutations each parameterized by a CNN with two hidden layers with 512 filters each. In the GLOW experiment we parameterized $T$ as the back-to-back composition of same flows used for density estimation, but initialized from scratch instead of optimal models for $A$ and $B$. We used the Adam optimizer with learning rate $10^{-5}$ for training.

\subsection{Other design considerations}

\textbf{On the relation to the Invariant Risk Minimization.}

In a recent arXiv submission, \citet{arjovsky2019invariant} suggested that in the presence of an observable variability in the environment $e$ (e.g. labeled night-vs-day variability in images) the representation function $\Phi(x)$ that minimizes the conventional empirical risk across all variations actually yields a subpar classifier. One interpretation of this statement is that instead of searching for a representation function $\Phi(x)$ that minimizes the expected value of the risk $$\mathcal R^e(f) = \mathbb E_{(X, Y) \sim P_e} l(f(X), Y)$$
across all variations in the environment $e$: $$\min_{\Phi} \min_{\theta} \mathbb E_{e} \mathcal R^e(f(\Phi(\cdot), \theta))$$
one should look for a representation that is optimal under each individual variation of the environment
\begin{align*}
    \min_{\Phi} \big[ & \min_{\theta} \mathbb E_{\epsilon} \mathcal R^e(f(\Phi(\cdot), \theta)) 
     - \mathbb E_{\epsilon} \min_{\theta_e} \mathcal R^e(f(\Phi(\cdot), \theta_e)) \ \big]
\end{align*}
\citet{arjovsky2019invariant} linearise this objective combined with the conventional ERM around the optimal $\theta$, and express the aforementioned optimally across all environments as a gradient penalty term that equals zero only if $\Phi$ is indeed optimal across all environment variations: 
$$ \min_{\Phi} \min_{\theta'} \mathbb E_{e} \mathcal R^e(f(\Phi(\cdot), \theta')) + \lambda \mathbb E_{e} || \nabla_{\theta} \mathcal R^e(f(\Phi(\cdot), \theta')) ||_2. $$

If we perform the Taylor expansion of the log-likelihood ratio statistic near the optimal shared model $\theta_S$, we get the score test statistic - a ``lighter'' version of the log-likelihood ratio test that requires training only a single model. Intuitively, if we train a model from $M$ simultaneously on two datasets $A$ and $B$ until convergence, i.e. until the average gradient of the loss w.r.t. weights $g_X = \nabla_{\theta} L(X; \theta)$ summed across both datasets becomes small $||g_A + g_B|| \leq \varepsilon$, then the combined norm of two gradients computed across each dataset independently would be small $||g_A|| + ||g_B|| \leq \varepsilon$, only under the null hypothesis ($A$ and $B$ are equivalent w.r.t. $M$). From our experience, this approach works well for detecting the presence of the domain shift, but is hardly suitable for direct minimization. 

Both procedures and resulting objectives are very much reminiscent of the log-likelihood ratio minimizing flow objective we propose in this paper, and we would have obtained the score test version if we linearized our objective around the optimal $\theta_S$. The main difference being that \citet{arjovsky2019invariant} applied the idea of invariance across changing environments to the setting of supervised training via risk minimization, whereas we apply it to unsupervised alignment via likelihood maximization.

\textbf{On directly estimating likelihood scores across domains.} 

One could suggest to estimate the similarity between datasets by directly evaluating and optimizing some combination of $P_M(A; \theta_B)$ and $P_M(B; \theta_A)$. Unfortunately, high likelihood values themselves are not very indicative of belonging to the dataset used for training the model, especially in higher dimensions, as explored by \citet{nalisnick2018deep}. One intuitive example of this effect in action is that for a high-dimensional normally distributed $x \sim \mathcal N_{d}(0, I)$ the probability of observing a sample in the neighbourhood of zero $P(||x|| \leq r)$ is small, but if we had a dataset $\{y_i\}_{i=0}^n$ sampled from that neighbourhood $||y_i|| \leq r$, its log-likelihood $\sum_i \log \mathcal N_d(y_i | 0, I)$ would be high, even higher then the likelihood of the dataset sampled from $\mathcal N_d(0, I)$ itself. The proposed method, however, is not susceptible to this issue as we always evaluate the likelihood on the same dataset we used for training.

\textbf{On matching the parameters of density models.} 

Two major objections we have to directly minimizing the distance between parameters $\theta$ of density models fitted to respective datasets $|| \theta_{AT} - \theta_B||$ are that: a) the set of parameters that describes a given distribution might be not unique, and this objective does not consider this case; and b) one would have to employ some higher-order derivatives of the likelihood function to account for the fact that not all parameters contribute equally to the learned density function, therefore rendering this objective computationally infeasible to optimize for even moderately complicated density models.

\textbf{On replacing the Gaussian prior with a learned density in normalizing flows.} 

We explored whether a similar distribution alignment effect can be achieved by directly fitting a density model to the target distribution $B$ to obtain the optimal $\theta_B^*$ first, and then fitting a flow model $T(x, \phi)$ to the dataset $A$ but replacing the Gaussian prior with the learned density of $B$:
\begin{gather*}
    \begin{aligned}
    \max_{\phi} \Big[ \log P_M(T^{-1}(A, \phi) ; \theta^*_B) - \log\det|\nabla_x T(A; \phi)|  \Big].
    \end{aligned}
\end{gather*}
While this procedure worked on distributions that were very similar to begin with, in the majority of cases the log-likelihood fit to $B$ did not provide informative gradients when evaluated on the transformed dataset, as the KL-divergence between distributions with disjoint supports is infinite. Moreover, even when this objective did not explode, multi-modality of $P_M(x; \theta_B)$ often caused the learned transformation to map $A$ to one of its modes. Training both $\phi$ and $\theta_B$ jointly or in alternation yielded a procedure that was very sensitive to the choice of learning rates and hyperparameters, and failed silently, which were the reasons we abandoned adversarial methods in the first place. The LRMF method described in this paper is not susceptible to this problem, because we never train a density estimator on one dataset and evaluate its log-likelihood on another dataset.

\subsection{FFJORD LRMF experiment on moons.}

As mentioned in the main paper, FFJORD LRMF performed on par with Real NVP version. We had to fit $T(x, \phi)$ to identity function prior to optimizing the LRMF objective, because the glorot uniform initialized 5-layer neural network with tanh non-linearities (used as a velocity field in FFJORD) generated significantly non-zero outputs. The dynamics can be found in the Figure \ref{fig:lrmf_ffjord}.

\subsection{Proof of Lemma 2.1}\label{sec:lemma21_proof}

\begin{proof}
If we define $f(x) = \log P_M(A, x)$ and $g(x) = \log P_M(B, x)$, the first statement $d_{\Lambda} \geq 0$ follows from the fact that 
\begin{gather*}
    \forall x \  f(x) + g(x) \geq \min_x f(x) + \min_x g(x) \ \Rightarrow
    \min_x (f(x) + g(x)) - \min_x f(x) - \min_x g(x) \geq 0
\end{gather*}
The second statement $f(x^*) = \min_x f(x), g(x^*) = \min_x g(x)$ comes form the fact that the equality holds only if there exists such $x^*$ that 
\begin{gather*}
    f(x^*) + g(x^*) = \min_x f(x) + \min_x g(x)
\end{gather*}
Assume that $f(x^*) \neq \min_x f(x)$, then $f(x^*) > \min_x f(x)$ from the definition of the $\min$, therefore $$g(x^*) = (f(x^*) + g(x^*)) - f(x^*) < (\min_x f(x) + \min_x g(x)) - \min_x f(x) = \min_x g(x),$$ which contradicts the definition of the $\min_x g(x)$, therefore $f(x^*) = \min_x f(x)$.
\end{proof}

\subsection{Proof of Lemma 2.2}\label{sec:lemma22_proof}
First, we add and remove the true (unknown) entropy $H[P_A] = - \mathbb E_{a \sim P_A} \log P_A(a)$:
\begin{gather*}
    \max_{\theta_{A}} \mathbb E_{a \sim P_A} \log P_M(a; \theta_{A}) = \max_{\theta_{A}} \left[ \mathbb E_{a \sim P_A} \log P_A(a) - \mathbb E_{a \sim P_A} \log \frac{P_{A}(a)}{P_M(a; \theta_{A})} \right] \\ = H[P_A] - \min_{\theta_{A}} \mathbb E_{a \sim P_A} \left[ \log \frac{P_{A}(a)}{P_M(a; \theta_{A})} \right] = H[P_A] - \min_{\theta} \mathcal D_{KL}(P_A; M(\theta)).
    \tag{$\star$}
\end{gather*}
And then add and remove the (unknown) entropy of the transformed distribution $H[T[P_A, \phi]]$. We also use the change of variable formula $T[P_A](x) = P_A(T^{-1}(x)) \cdot \det|\nabla_x T^{-1}(x)|$, and substitute the expression for $H[P_A]$ from the previous line $(\star)$:
\begin{gather*}
\begin{aligned}
    & \max_{\theta_{AT}} \log P_M(T(A; \phi); \theta_{AT}) = \max_{\theta_{AT}} \mathbb E_{a' \sim T[P_A, \phi]} \log P_M(a'; \theta_{AT}) \\
    = & \max_{\theta_{AT}} \left[ \mathbb E_{a' \sim T[P_A, \phi]} \log T[P_A](a') - \mathbb E_{a' \sim T[P_A, \phi]} \log \frac{T[P_A, \phi](a')}{P_M(a'; \theta_{AT})} \right] \\
    = & \max_{\theta_{AT}} \Big[ \mathbb E_{a \sim P_A} P_A(T^{-1}(T(a, \phi) ,\phi)) \ 
    + \\ & \qquad\qquad\qquad 
    + \log\det|\nabla_x T^{-1}(T(a, \phi), \phi)| - \mathcal D_{KL}(T[P_A, \phi]; M(\theta_{AT})) \Big] \\ 
    = & \ H[P_A] - \log\det|\nabla_x T(A, \phi)| - \min_{\theta} \mathcal D_{KL}(T[P_A, \phi]; M(\theta)) \\
    \leq & \max_{\theta_{A}} \log P_M(A; \theta_{A}) - \log\det|\nabla_x T(A, \phi)| + \mathcal E_{bias}(A, T, M).
    \end{aligned}
\end{gather*}

\subsection{Simulation results for the Example \ref{ex:mixture_lrmf_gradient_failure}}\label{sec:numerical_example_mixture_lrmf}

We approximated $| \partial H[m_1, m_2(\mu)] / \partial \mu |$, where $m_1$ and $m_2(\mu)$ are two equal mixtures of normal distributions, by computing the partial derivative using auto-differentiation in JAX. The objective was $L = \operatorname{logsumexp}(\{\log(p_i(X; \mu)) + \log 2\}_i)$, where $\log p_i(x; \mu)$ is a log probability of the mixture component from $m_2$, and $X$ is a fixed large enough (n=100k) sample from the $m_1$. Figure \ref{fig:sup_grad} shows that $\sqrt{-\log(|\partial L / \partial \mu|)}$ fits to $a\mu + b$ for $a = 0.6, b = -1.168$ with $R = 0.99996$, therefore making us believe that $\left\lVert [\partial L(b, \mu) / \partial \mu] (0, \mu) \right\rVert \propto \exp(-\mu^2)$. The code is available in \verb|lrmf_gradient_simulation.ipynb|.

\begin{figure}
\centering

\subfloat[$\sqrt{-\log(|\partial L / \partial \mu|)}$ vs $a\mu + b$]{\includegraphics[width=.3\linewidth]{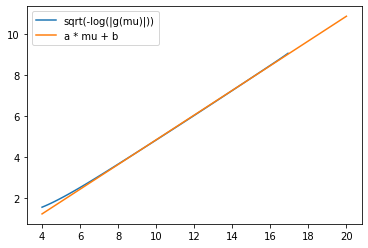}} \hspace{5px}
\subfloat[$|\partial L / \partial \mu|$ vs $\exp(-(a\mu + b)^2)$]{\includegraphics[width=.3\linewidth]{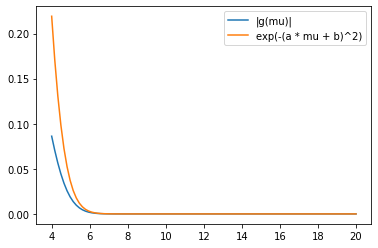}} \hspace{5px}
\subfloat[same as \textbf{b} for $\mu \in 7\dots16$] {\includegraphics[width=.3\linewidth]{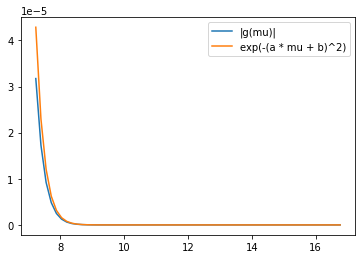}}
\caption{Gradient of the cross-entropy of between two mixture models as a function of the mean of one of the first components of the first mixture to illustrate the Example \ref{ex:mixture_lrmf_gradient_failure}, estimated using JAX.\label{fig:sup_grad}}
\end{figure}

\begin{figure*}
\centering

\vspace{-8px}
\includegraphics[trim=0 230
0 0,width=\textwidth]{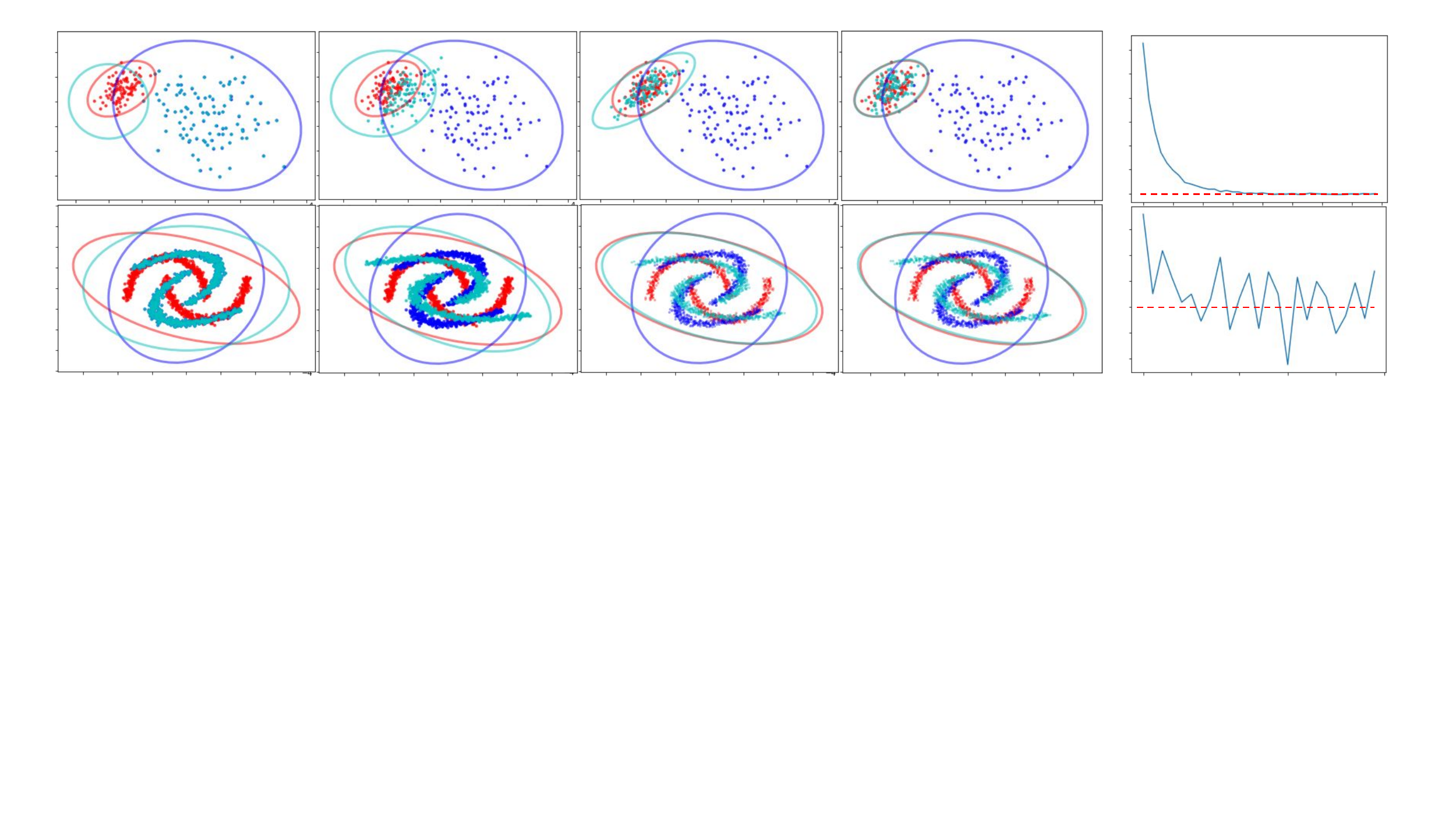}
\caption{\textbf{The dynamics of training an affine log-likelihood ratio minimizing flow (LRMF) w.r.t. the Gaussian family on the blob and moons datasets}. The LRMF is trained to match $A$ (blue) with $B$ (red), its outputs $T(A)$ are colored with cyan, circles indicate $3\sigma$ levels of $\theta_A, \theta_B$ and $\theta_{AT}$ respectively. This experiment shows that even a severely \textit{under-parameterized LRMF} does a good job at aligning distributions (second row). As in the \textbf{Example 2.1}, the optimal affine LRMF w.r.t. Gaussian family matches first two moments of given datasets. Rightmost column shows LRMF  convergence. \label{fig:lrmf_gaussian}}

\bigskip

\includegraphics[trim=0 180
0 0,width=\textwidth]{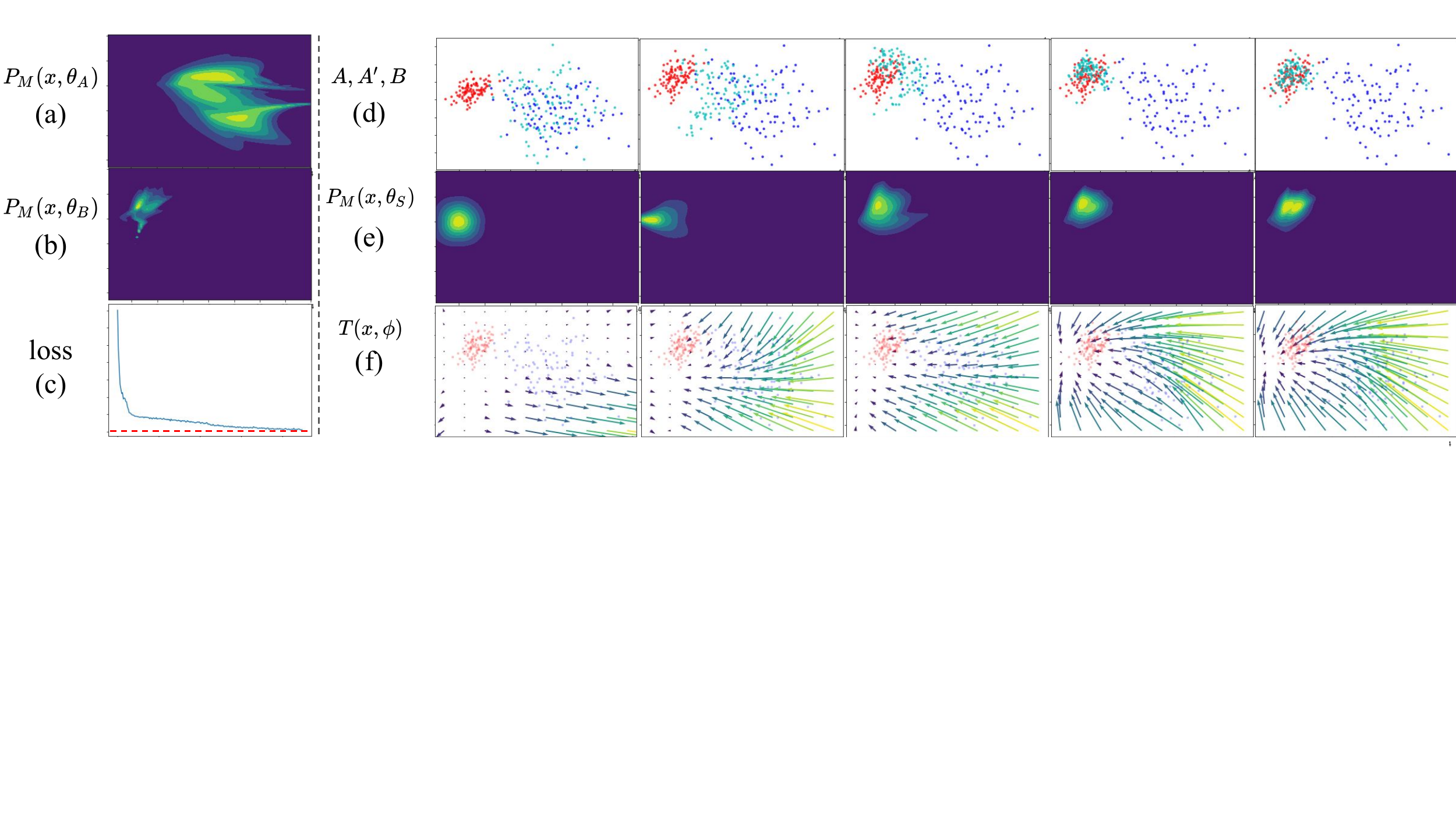}
(i) blobs

\includegraphics[trim=0 180
0 0,width=\textwidth]{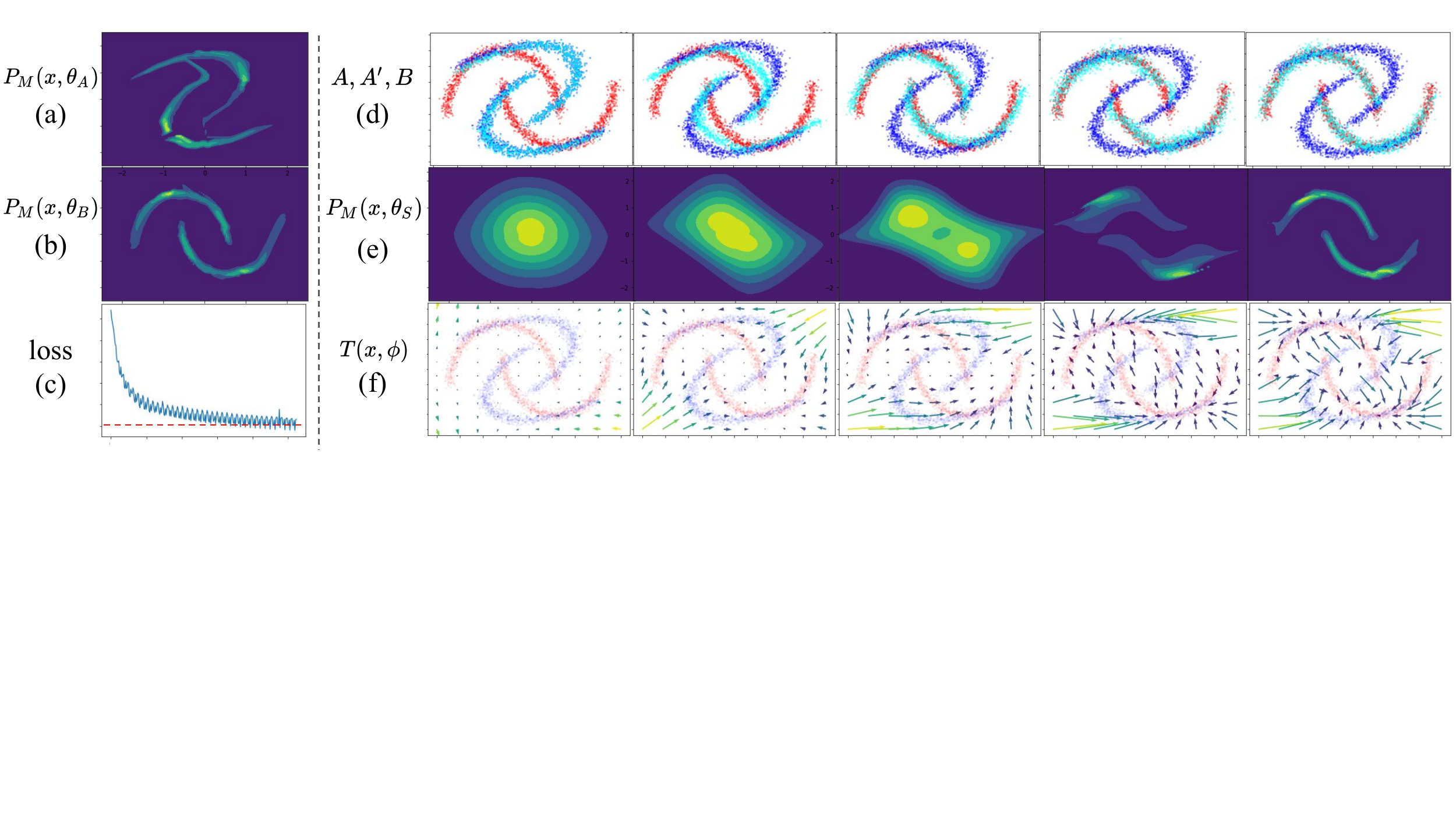}
(ii) moons
\caption{\textbf{The dynamics of training a Real NVP log-likelihood ratio minimizing flow (LRMF) on the blob and moons datasets}. This experiment shows that even a severely \textit{overparameterized LRMF} does a good job at aligning distributions: RealNVP clearly overfits to the blob dataset but learns a good alignment nevertheless.  \textbf{(a, b)} The Real NVP density estimators fitted to datasets A (blue) and B (red). \textbf{(c)} The LRMF objective (Eq 2) decreases over time and reaches zero when two datasets are aligned. The red line indicates the zero loss level. \textbf{(d)} The evolution $A$ (blue), $A' = T(A, \phi)$ (cyan) and $B$ (red). \textbf{(e)} The probability density function of the shared model $P_M(x, \theta_S)$ fitted to $A'$ and $B$. When LRMF objective converges, $P_M(x, \theta_S)$ matches $P_M(x, \theta_B)$. \textbf{(f)} The visualization of the trained normalizing flow $T$, at each point $x$ we draw a vector pointing along the direction $v = x - T(x, \phi)$ with color intensity and length proportional to $v$. \label{fig:lrmf_real_nvp}}
\end{figure*}
\begin{figure*}[ht]
\centering
\includegraphics[trim=0 50
0 0,width=\textwidth]{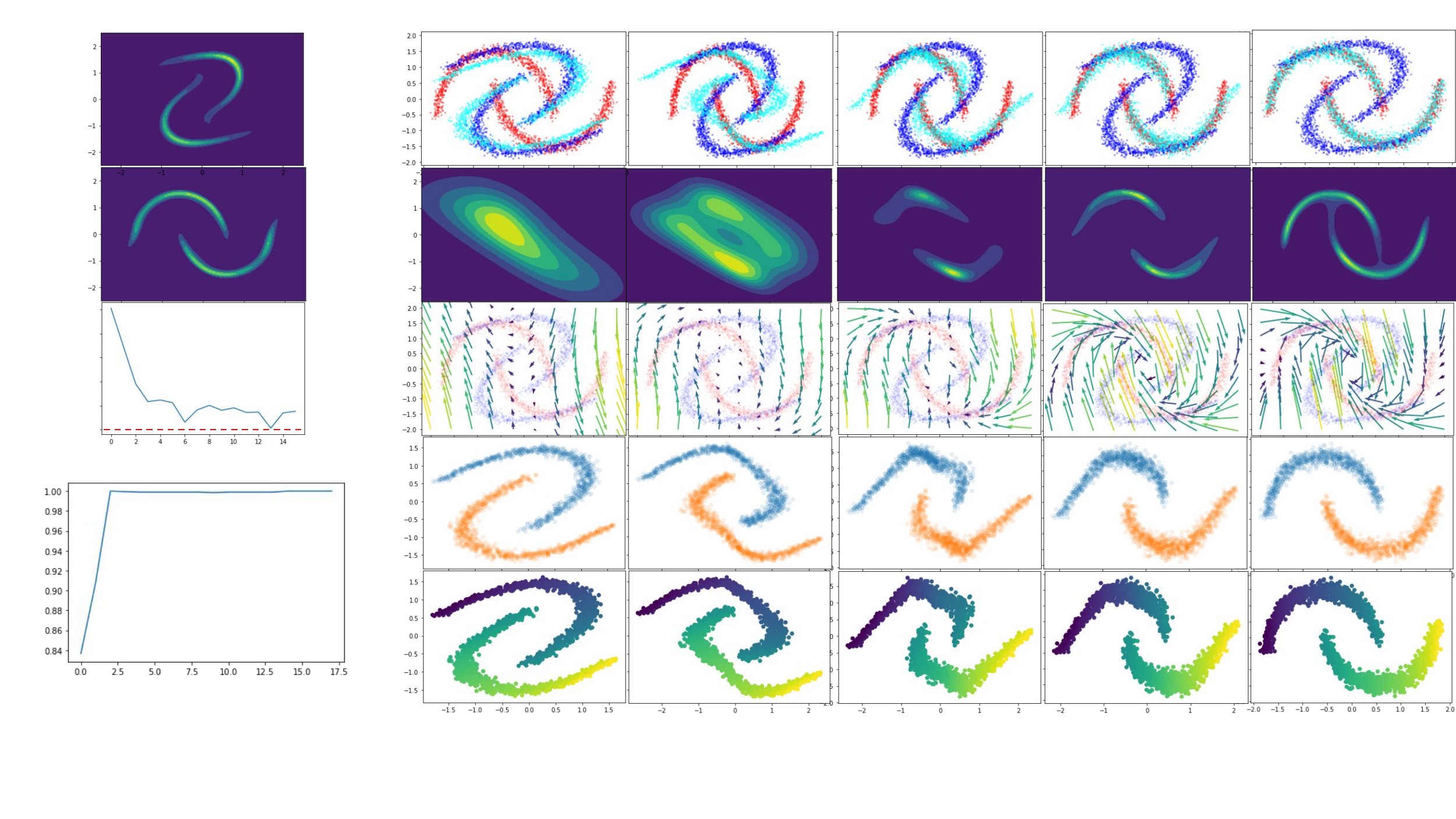}
\caption{\textbf{The dynamics of training a FFJORD log-likelihood ratio minimizing flow (LRMF) on the moons dataset}. Notations are similar to Figures 5 and 6. The left bottom plot shows changes in accuracy over time.}
\label{fig:lrmf_ffjord}
\end{figure*}
\begin{figure*}[ht]
\centering

\includegraphics[trim=0 25
20 0,width=\textwidth]{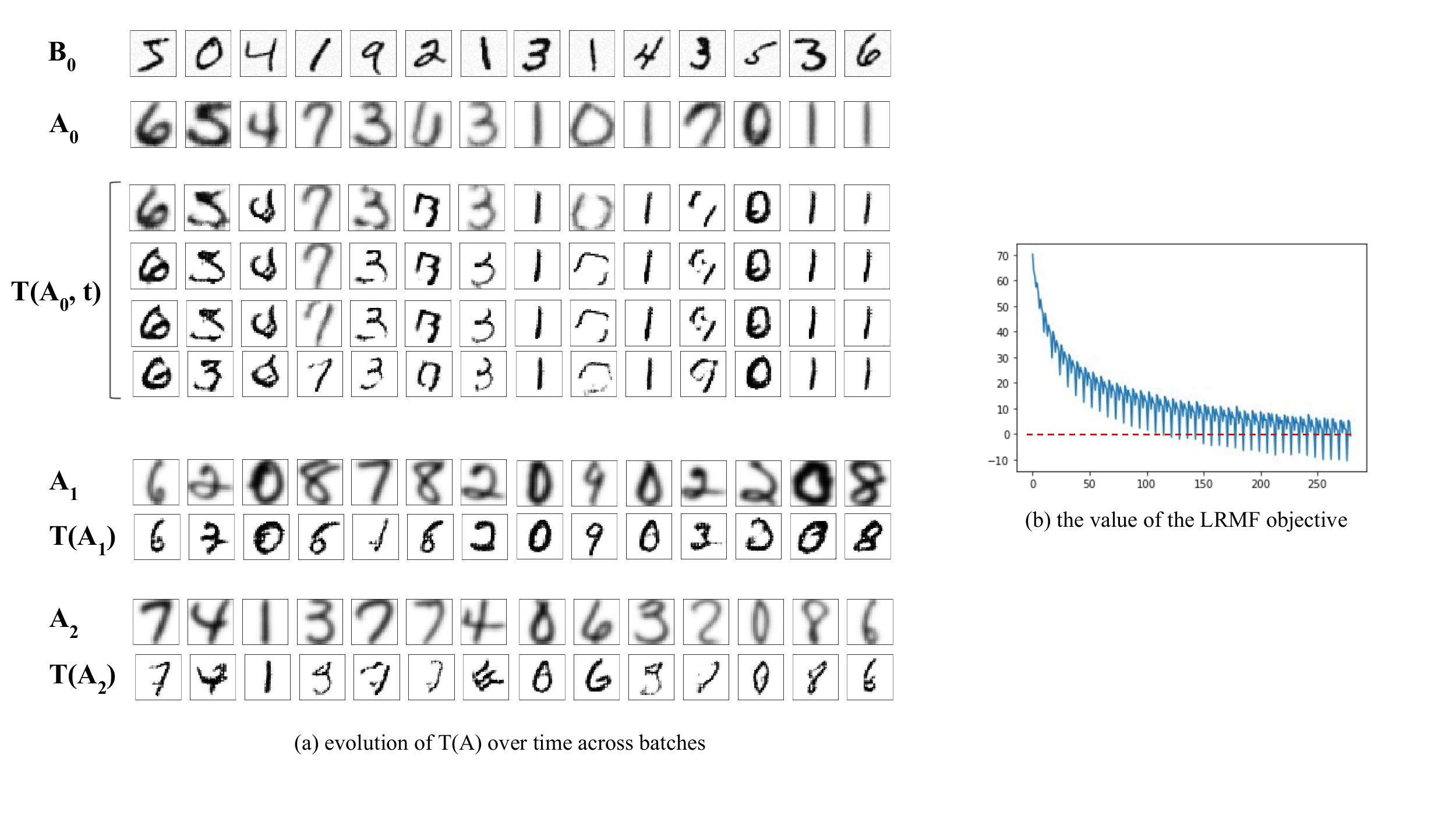}
\caption{\textbf{The dynamics of RealNVP LRMF semantically aligning USPS and MNIST digits in the latent space}. \textbf{(a)} Different rows in $T(A_0, t)$ represent transformations of the batch $A_0$ different time steps. Other rows represent the final learned transformation applied to other batches $A_1, A_2$. \textbf{(b)} The LRMF objective converged to zero in average.}
\label{fig:lrmf_digits_full}
\end{figure*}

\end{document}